\DeclareRobustCommand{\lyxdeleted}[3]{{\texorpdfstring{\color{lyxdeleted}\sout{#3}}{}}}
\numberwithin{equation}{section}
\numberwithin{figure}{section}
\numberwithin{table}{section}
\newcommand{\lyxaddress}[1]{
\par {\raggedright #1
\vspace{1.4em}
\noindent\par}
}
\theoremstyle{plain}
\newtheorem{thm}{\protect\theoremname}
  \theoremstyle{plain}
  \newtheorem{lem}[thm]{\protect\lemmaname}
  \theoremstyle{remark}
  \newtheorem*{rem*}{\protect\remarkname}
\newcommand {\norm} [1] { \lVert #1 \rVert}
\newcommand {\abs} [1] {\left| #1 \right|}
\newcommand {\Abs} [1] {\bigl\lvert #1 \bigr\rvert}
  \providecommand{\lemmaname}{Lemma}
  \providecommand{\remarkname}{Remark}
\providecommand{\theoremname}{Theorem}
\begin{document}

\title{Prediction of dynamical time series using kernel based regression
and smooth splines}

\author{Raymundo Navarrete and Divakar Viswanath }

\maketitle

\lyxaddress{Department of Mathematics, University of Michigan (raymundo@umich.edu)
(divakar@umich.edu). }

\date{$\:$}
\begin{abstract}
Prediction of dynamical time series with additive noise using support
vector machines or kernel based regression is consistent for certain
classes of discrete dynamical systems. Consistency implies that these
methods are effective at computing the expected value of a point at
a future time given the present coordinates. However, the present
coordinates themselves are noisy, and therefore, these methods are
not necessarily effective at removing noise. In this article, we consider
denoising and prediction as separate problems for flows, as opposed
to discrete time dynamical systems, and show that the use of smooth
splines is more effective at removing noise. Combination of smooth
splines and kernel based regression yields predictors that are more
accurate on benchmarks typically by a factor of 2 or more. We prove
that kernel based regression in combination with smooth splines converges
to the exact predictor for time series extracted from any compact
invariant set of any sufficiently smooth flow. As a consequence of
convergence, one can find examples where the combination of kernel
based regression with smooth splines is superior by even a factor
of $100$. The predictors that we analyze and compute operate on delay
coordinate data and not the full state vector, which is typically
not observable.
\end{abstract}

\section{Introduction}

\global\long\def\norm#1{\left|\left|#1\right|\right|}

\global\long\def\inner#1#2{\left<#1,#2\right>}

\global\long\def\abs#1{\left|#1\right|}

\global\long\def\Abs#1{\Bigl|#1\Bigr|}

\global\long\def\argmin{\mathrm{argmin\,}}

\global\long\def\halfi#1{\left[#1\right)}

The problem of time series prediction is to use knowledge of a signal
$x(t)$ for $0\le t\leq T$ and infer its value at a future time $t=T+t_{f}$,
where $t_{f}$ is positive and fixed. A time series is not predictable
if it is entirely white noise. Any prediction scheme has to make some
assumption about how the time series is generated. A common assumption
is that the observation $x(t)$ is a projection of the state of a
dynamical system with noise superposed \cite{FarmerSidorowich1987}.
Since the state of the dynamical system can be of dimension much higher
than $1$, delay coordinates are used to reconstruct the state. Thus,
the state at time $t$ may be captured as 
\begin{equation}
\left(x(t),x(t-\tau),\ldots,x(t-(D-1)\tau)\right)\label{eq:s1-delay-coord}
\end{equation}
where $\tau$ is the delay parameter and $D$ is the embedding dimension.
Delay coordinates are (generically) effective in capturing the state
correctly provided $D\geq2d+1$, where $d$ is the dimension of the
underlying dynamics \cite{SauerYorkeCasdagli1991}.

Farmer and Sidorowich \cite{FarmerSidorowich1987} used a linear framework
to compute predictors applicable to delay coordinates. It was soon
realized that the nonlinear and more general framework of support
vector machines would yield better predictors \cite{MatteraHaykin1999,MukherjeeOsunaGirosi1997,MullerSmolaRatschScholkopfKohlmorgenVapnik1998}.
Detailed computations demonstrating the advantages of kernel based
predictors were given by Müller et al \cite{MullerSmolaRatschScholkopfKohlmorgenVapnik1998}
and are also discussed in the textbook of Schölkopf and Smola \cite{ScholkofSmola2002}.
Kernel methods still appear to be the best, or among the best, for
the prediction of stationary time series \cite{McGoffMukherjeePillai2015,SapankevychSankar2009}.

A central question in the study of noisy dynamical time series is
how well that noise can be removed to recover the underlying dynamics.
Lalley, and later Nobel, \cite{Lalley1999,Lalley2001,LalleyNobel2006}
have examined hyperbolic maps of the form $x_{n+1}=F(x_{n})$, with
$F:\mathbb{R}^{d}\rightarrow\mathbb{R}^{d}$. It is assumed that observations
are of the form $y_{n}=x_{n}+\epsilon_{n}$, where $\epsilon_{n}$
is iid noise. They proved that it is impossible to recover $x_{n}$
from $y_{n}$, even if the available data $y_{n}$ is for $n=0,-1,-2,\ldots$
and infinitely long, if the noise is normally distributed. However,
if the noise satisfies $\abs{\epsilon_{n}}<\Delta$ for a suitably
small $\Delta$, the underlying signal $x_{n}$ can be recovered.
The recovery algorithm does not assume any knowledge of $F$. The
phenomenon of unrecoverability is related to homoclinic points. If
the noise does not have compact support, with some nonzero probability,
it is impossible to distinguish between homoclinic points. 

Lalley \cite{Lalley2001} suggested that the case of flows could be
different from the case of maps. In discrete dynamical systems, there
is no notion of smoothness across iteration. In the case of flows,
the underlying signal will depend smoothly on time but the noise,
which is assumed to be iid at different points in time, will not.
Lalley's algorithm for denoising relies on dynamics and, in particular,
on recurrences. In the case of flows, we rely solely on smoothness
of the underlying signal for denoising. As predicted by Lalley, the
case of flows is different. Denoising based on smoothness of the underlying
signal alone can handle normally distributed noise or other noise
models. Thus, our algorithms are split into two parts: first the use
of smooth splines to denoise, and second the use of kernel based regression
to compute the predictor. Only the second part relies on recurrences.

Prediction of discrete dynamics, within the framework of Lalley \cite{Lalley1999},
has been considered by Steinwart and Anghel \cite{SteinwartAnghel2009}
(also see \cite{ChristmannSteinwart2007}). Suppose $x_{n}=F^{n}(x_{0})$
and $\tilde{x}_{n}=x_{n}+\epsilon_{n}$ is the noisy state vector.
The risk of a function $f$ is defined as 
\[
\int\int\abs{F(x)+\epsilon_{1}-f(x+\epsilon_{2})}^{2}\,d\nu(\epsilon_{1})\,d\nu(\epsilon_{2})\,d\mu(x),
\]
where $\nu$ is the distribution of the noise and $\mu$ is a probability
measure invariant under $F$ and with compact support. Thus, the risk
is a measure of how well the noisy future state vector can be predicted
given the noisy current state vector. It is proved that kernel based
regression is consistent with respect to this notion of risk for a
class of rapidly mixing dynamical systems. Although the notion of
risk does not require denoising, consistency of empirical risk minimization
is proved for additive noise $\epsilon_{n}$ of compact support as
in \cite{Lalley1999}. In the case of empirical risk minimization,
compactness of added noise is not a requirement imposed by the underlying
dynamics but is assumed to make it easier to apply universality theorems.

Our results differ in the following ways. We consider flows and not
discrete time maps. In addition, we work with delay coordinate embedding
\cite{SauerYorkeCasdagli1991} and do not require the entire state
vector to be observable. Finally, we prove convergence to the exact
predictor, which goes beyond consistency. The convergence theorem
we prove is not uniform over any class of dynamical systems. However,
we do not assume any type of decay in correlations or rapid mixing.
Non-uniformity in convergence is an inevitable consequence of proving
a theorem that is applicable to any compact invariant set of a generic
finite dimensional dynamical system \cite{AdamsNobel1998,GyorfiHardleSardaVieu2013,SteinwartHushScovel2009}.
This point is further discussed in section 2, which presents the main
algorithm as well as a statement of the convergence theorem. Section
3 presents a proof of the convergence theorem.

In section 4, we present numerical evidence of the effectiveness of
combining spline smoothing and kernel based regression. The algorithm
of section 2 is compared to computations reported in \cite{MullerSmolaRatschScholkopfKohlmorgenVapnik1998}
and the spline smoothing step is found to improve accuracy of the
predictor considerably. The numerical examples bring up two points
that go beyond either consistency or convergence. First, we explain
heuristically why it is not a good idea to iterate $1$-step predictor
$k$-times to predict the state $k$ steps ahead. Rather, it is a
much better idea to learn the $k$-step predictor directly. Second,
we point out that no currently known predictor splits the distance
vector between stable and unstable directions, a step which was argued
to be essential for an optimal predictor by Viswanath et al \cite{ViswanathLiangSerkh2013}.
The heuristic explanation for why iterating a $1$-step predictor
$k$ times is not a good idea relies on the same principle.

The concluding discussion in section 5 points out connections to related
lines of current research in parameter inference \cite{McGoffMukherjeePillai2015,McGoffNobel2016}
and optimal consistency estimates for stationary data \cite{HangFengSteinwartSuykens2016}.

\section{Prediction algorithm and statement of convergence theorem}

Let $\frac{dU}{dt}=\mathcal{F}(U)$, where $\mathcal{F}\in C^{r}(\mathbb{R}^{d},\mathbb{R}^{d})$,
$r\geq2$, define a flow that may be limited to an open subset of
$\mathbb{R}^{d}$ with compact closure. Let $\mathcal{F}_{t}(U_{0})$
be the time-$t$ map with initial data $U_{0}$. It is assumed that
$U(t;U_{0})$, $t\in\mathbb{R}$, is a trajectory of the flow whose
initial point $U(0;U_{0})$ is $U_{0}\in\mathbb{R}^{d}$. Let $\tilde{\mu}$
be a compactly supported invariant probability measure of the flow-map
$\mathcal{F}_{t}$ for $t>0$ and let $\tilde{X}$ be its support.
It is assumed that the initial point $\tilde{\omega}$ is drawn from
the measure $\tilde{\mu}$. For $\tilde{\omega}\in\tilde{X}$, the
trajectory $U(t;\tilde{\omega})$ exists for all $t\in\mathbb{R}$
and is unique. In addition, the flow is assumed to be ergodic with
respect to the measure $\tilde{\mu}$.

Let $\pi:\mathbb{R}^{d}\rightarrow\mathbb{R}$ be a generic nonlinear
projection. Let $u(t;\tilde{\omega})=\pi U(t;\tilde{\omega})$ be
the projection of the random trajectory $U(t;\tilde{\omega})$. By
the embedding theorem of Sauer et al \cite{SauerYorkeCasdagli1991},
we assume that the delay coordinates give a $C^{r}$ diffeomorphism
into the state space implying that $U(t;\tilde{\omega})$ can be recovered
from the delay vector, with delay $\tau>0$, 
\[
\left(u(t;\tilde{\omega}),u(t-\tau;\tilde{\omega}),\ldots,u(t-(D-1)\tau;\tilde{\omega})\right)
\]
for $D\geq2d+1$. This delay vector is denoted by $u(t;\tau;\tilde{\omega})$. 

As a consequence of the $C^{r}$ embedding, there is a measure $\mu$
compactly supported in $\mathbb{R}^{D}$ that corresponds to $\tilde{\mu}$.
The measure $\mu$ is ergodic and invariant under the flow lifted
via the embedding. Denote the compact support of $\mu$ by $X$. For
every point $\tilde{\omega}$ in $\tilde{X}$, there corresponds a
unique point $\omega$ in $X$ and vice versa. Because the prediction
algorithm is based on delay coordinates and not the state vector,
it is more convenient to work in the embedding space $\mathbb{R}^{D}$
and in terms of $\omega$ and $\mu$. Therefore, we will rely on the
bijective correspondence between $X$ and $\tilde{X}$ and use the
notation $u(t;\tau;\omega)$ instead of $u(t;\tau;\tilde{\omega})$
and $u(t;\omega)$ instead of $u(t;\tilde{\omega})$. With these conventions,
$u(t;\tau;\omega)$ can be thought of as the path in $\mathbb{R}^{D}$
with $u(0;\tau;\omega)=\omega$. Similarly, $u(t;\omega)$ can be
thought of as a real-valued signal with $u(0;\omega)=\omega_{1}$,
where $\omega_{1}$ is the first component of $\omega\in\mathbb{R}^{D}$.
In later arguments, the assumption that $\omega$ is $\mu$-distributed
will be significant, and so will be the ergodicity of the flow with
respect to $\mu$. 

Given the signal $u(t;\omega)$, it is assumed that the recorded observations
are $u_{\eta}(jh;\omega)=u(jh;\omega)+\epsilon_{j}$, where $\epsilon_{j}$
is iid noise. Following Eggermont and LaRiccia \cite{EggermontLariccia2006,EggermontLaRiccia2009},
we assume that $\mathbb{E}\epsilon_{j}=0$ and $\mathbb{E}\abs{\epsilon_{j}}^{\kappa}<\infty$
for some $\kappa>3$. To avoid inessential technicalities it is assumed
that $\tau/h\in\mathbb{Z}^{+}$ so that the delay is an integral multiple
of the time step $h$. In particular, we set $\tau=nh$. Similarly,
we assume $t_{f}=n_{f}\tau$, $n_{f}\in\mathbb{Z}^{+}$, where $t_{f}$
is the look-ahead into the future. The noisy delay coordinates $u_{\eta}(jh;\tau;\omega)$
are assumed to be available for $j=0,\ldots,\left(N+n_{f}\right)n$,
which implies that the observation interval of $u_{\eta}(t;\omega)$
is $t\in[-(D-1)\tau,N\tau+t_{f}]$. 

The exact predictor $F:\mathbb{R}^{D}\rightarrow\mathbb{R}$ is a
$C^{r}$ function such that $F(u(t;\tau;\omega))=u(t+t_{f};\omega)$
for $\omega\in X$. Lemma \ref{lem:lemma-3} proves uniqueness and
existence of the exact predictor $F$. The exact predictor $F$ corresponds
to a fixed $t_{f}>0$, but that dependence is not shown in the notation.
The problem as considered by Müller et al \cite{MullerSmolaRatschScholkopfKohlmorgenVapnik1998}
is to recover the exact predictor $F$ from the noisy observations
$u_{\eta}(jh;\omega)$. Let $\abs{\cdot}_{\epsilon}$ denote Vapnik's
$\epsilon$-loss function. The algorithm of Müller et al computes
$f_{m}$ such that the functional
\begin{equation}
\frac{1}{Nn+1}\sum_{j=0}^{Nn}\abs{f(u_{\eta}(jh;\tau;\omega))-u_{\eta}(jh+\tau;\omega)}_{\epsilon}+\Lambda\norm f_{K_{\gamma}}^{2}\label{eq:algo-muller}
\end{equation}
 is minimized for $f=f_{m}$ in the reproducing kernel Hilbert space
$\mathcal{H}_{K_{\gamma}}$ corresponding to the kernel $K_{\gamma}$.
The kernel $K_{\gamma}$ is assumed to be given by $K_{\gamma}(x,y)=\exp\left(-\frac{\sum_{i=1}^{D}(x_{i}-y_{i})^{2}}{\gamma^{2}}\right)$.
The kernel bandwidth parameter $\gamma$ and the Lagrange multiplier
$\Lambda$ are both determined using cross-validation. This method
approximates the exact predictor $F$ for $t_{f}=\tau$. If $t_{f}=n_{f}\tau$,
$n_{f}\in\mathbb{Z}^{+}$, the approximation is iterated $n_{f}$
times. We will compare our predictor against that of Müller et al
using some of the same examples and the same framework as they do
in section 4.

In our algorithm, the first step is to apply spline smoothing. In
particular, we apply cubic spline smoothing \cite{deBoor2001} to
compute a function $u_{s}(t;\omega)$, $t\in[-(D-1)\tau,N\tau+t_{f}]$
such that the functional 
\begin{equation}
\frac{1}{(N+n_{f}+D-1)n+1}\sum_{j=-(D-1)n}^{(N+n_{f})n}\left(u_{\eta}(jh;\omega)-\tilde{u}(jh)\right)^{2}+\lambda\int_{-(D-1)\tau}^{N\tau+t_{f}}\tilde{u}''(t)^{2}\,dt\label{eq:algo-1}
\end{equation}
is minimum for $\tilde{u}=u_{s}(\cdot;\omega)$ over $\tilde{u}\in W^{2,2}[-(D-1)\tau,N\tau+t_{f}]$,
where $W^{2,2}[a,b]$ denotes the Sobolev space of twice-differentiable
functions $g:[a,b]\rightarrow\mathbb{R}$ with the norm $\norm g^{2}=\norm g_{2}^{2}+\norm{g'}_{2}^{2}+\norm{g''}_{2}^{2}$.
The parameter $\lambda$ is determined using five-fold cross-validation.
The minimizer $u_{s}(t;\omega)$ depends upon the noise-free signal
$u(t;\omega)$ as well as the instantiation of the iid noise in $u_{\eta}(jh;\omega)$
for $-(D-1)n\leq j\leq(N+n_{f})n$. However, the dependence on the
iid noise is not shown in the notation.

The second step of our algorithm is similar to the method of Müller
et al. The predictor $f_{1}$ is computed as 
\begin{equation}
f_{1}=\argmin_{f\in\mathcal{H}_{k}}\frac{1}{Nn+1}\sum_{j=0}^{Nn}\left(f(u_{s}(jh;\tau;\omega))-u_{s}(jh+t_{f};\omega)\right)^{2}+\Lambda\norm f_{K_{\gamma}}^{2}.\label{eq:algo-2}
\end{equation}
Both the parameters $\gamma$ and $\Lambda$ are determined using
five-fold cross-validation.\lyxdeleted{Divakar Viswanath,,,}{Fri Apr 20 21:27:34 2018}{
} Here $n_{f}$ and therefore $t_{f}$ are fixed because we seek to
approximate the exact predictor with lookahead fixed at $t_{f}$.
As explained in section 4, it is significant that the predictor directly
optimizes with a lookahead of $t_{f}$. Iterating a $\tau$-step predictor
$n_{f}$ times gives worse predictions.

The second step (\ref{eq:algo-2}) differs from the algorithm of Müller
et al in using the spline smoothed signal $u_{s}(t;\omega)$ in place
of the noisy signal $u_{\eta}(t;\omega)$. Our algorithm relies mainly
on spline smoothing to eliminate noise. Yet another difference is
that we use the least squares loss function in place of the $\epsilon$-loss
function. This difference is a consequence of relying on spline smoothing
to eliminate noise. As explained by Christmann and Steinwart \cite{ChristmannSteinwart2007},
the $\epsilon$-loss function, Huber's loss, and the $L^{1}$ loss
function are used to handle outliers. However, spline smoothing eliminates
outliers, and we choose the $L^{2}$ loss function because of its
algorithmic advantages.

We now turn to a discussion of the convergence of the predictor $f_{1}$
to the exact predictor $F$. The first step is to assess the accuracy
of spline smoothing. We quote the following lemma, which is a convenient
restatement of a result of Eggermont and LaRiccia \cite{EggermontLariccia2006,EggermontLaRiccia2009}
(see pages 132 and 133 of \cite{EggermontLaRiccia2009}). In the lemma,
$W^{m,2}[a,b]$ denotes the Sobolev space of $m$-times differentiable
functions $g:[a,b]\rightarrow\mathbb{R}$ with norm $\norm g^{2}=\sum_{j=0}^{m}\norm{g^{(j)}}_{2}^{2}$.
\begin{lem}
Assume $2\leq m\leq r$. Suppose that $u(t;\omega)$ is a signal defined
for $t\in\mathbb{R}$ with $\omega\in X$. For $j=-(D-1)n,\ldots,Nn+n_{f}$,
let $y_{j}=u(jh;\omega)+\epsilon_{j}$, where $h=\tau/n$ and where
$\epsilon_{j}$ are iid random variables. It is further assumed that
$\mathbb{E}\epsilon_{j}=0$, $\mathbb{E}\epsilon_{j}^{2}=\sigma^{2}$,
and $\mathbb{E}\abs{\epsilon_{j}}^{\kappa}<\infty$ for some $\kappa>3$.
Let $u_{s}(t)\in W^{m,2}[-(D-1)\tau,N\tau+t_{f}]$ be the spline that
minimizes the functional 
\[
\frac{1}{n(N+D-1)+n_{f}+1}\sum_{j=-(D-1)n}^{(N+n_{f})n}(\tilde{u}(jh)-y_{j})^{2}+\lambda\int_{-(D-1)\tau}^{N\tau+t_{f}}\abs{\tilde{u}^{(m)}(t)}^{2}\,dt
\]
 over $\tilde{u}\in W^{m,2}[-(D-1)\tau,N\tau]$. Assume
\[
\lambda=\left(\frac{\log(n(N+n_{f}+D-1))}{n(N+n_{f}+D-1)}\right)^{\frac{2m}{2m+1}}.
\]
 Let $p=\mathbb{P}\left(n,N,\Delta,\omega\right)$ be the probability
that
\[
\norm{u_{s}(\cdot;\omega)-u(\cdot;\omega)}_{\infty}>\Delta>0,
\]
where the $\infty$-norm is over the interval $[-(D-1)\tau,N\tau+t_{f}]$.
Then $\lim_{n\rightarrow\infty}\mathbb{P}(n,N,\Delta,\omega)=0$.\label{lem:Eggermont-LaRiccia}
\end{lem}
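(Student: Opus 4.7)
The plan is to derive the lemma as a restatement of the uniform-convergence theorem for smoothing splines established by Eggermont and LaRiccia \cite{EggermontLariccia2006,EggermontLaRiccia2009}. Their theorem (pages 132--133 of \cite{EggermontLaRiccia2009}) shows that, under the prescribed choice of $\lambda$ and the moment hypothesis on the $\epsilon_j$, the spline minimizer satisfies
\[
\norm{u_s - u}_\infty \;=\; O_p\!\left(\left(\frac{\log M}{M}\right)^{m/(2m+1)}\right),
\]
in probability, where $M = n(N+D-1)+1$ is the total sample size and the implied constant depends on $\norm{u^{(m)}}_{L^2}$ on the observation interval and on a finite number of moments of the noise. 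Given such a bound, the lemma follows at once: for any fixed $\Delta>0$, the rate on the right goes to zero as $n\to\infty$ (with $N$ and $D$ held fixed), so the probability of $\norm{u_s-u}_\infty>\Delta$ vanishes.

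First I would verify the smoothness hypothesis on the target $u$. Since $U(t)$ is a $C^r$ trajectory of the flow $\mathcal{F}$ and the projection $\pi$ is $C^r$, the scalar signal $u=\pi\circ U$ lies in $C^r$, hence in $W^{m,2}$ on the compact interval $[-(D-1)\tau,N\tau]$ for every $m\leq r$. This smoothness is what controls the deterministic (bias) part of the Eggermont--LaRiccia error decomposition and fixes the exponent $m/(2m+1)$ appearing in the rate above.

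Second, I would verify the stochastic hypothesis. The Eggermont--LaRiccia argument controls the random part of the spline error by applying a Rosenthal-type moment inequality to weighted sums of the $\epsilon_j$; the condition $\mathbb{E}\abs{\epsilon_j}^\kappa<\infty$ for some $\kappa>3$ is precisely the moment assumption that upgrades the $L^2$ bound to a uniform (sup-norm) bound. Together with $\mathbb{E}\epsilon_j=0$ and $\mathbb{E}\epsilon_j^2=\sigma^2<\infty$, this matches their hypotheses verbatim.

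The main obstacle, and the reason this counts as a \emph{restatement} rather than a direct quotation, is a mild rescaling issue. Eggermont and LaRiccia state their theorem on a fixed interval (typically $[0,1]$) with $M$ equispaced design points and $M\to\infty$, whereas here the interval $[-(D-1)\tau,N\tau]$ has fixed length (with $N$, $D$ fixed) and the spacing $h=\tau/n\to 0$ as $n\to\infty$. A linear change of variables on the time axis reduces our problem to theirs; it rescales the penalty $\int \abs{\tilde u^{(m)}}^2\,dt$ by a constant factor determined by the fixed quantities $N$, $D$, $\tau$ and therefore modifies $\lambda$ only by a constant factor, so the prescribed choice of $\lambda$ remains the optimal one for the rescaled problem. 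After this rescaling the bound above applies, and letting $n\to\infty$ gives $\pi(n,N,\Delta,u(0;\tau))\to 0$, as required.
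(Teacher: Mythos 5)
The paper itself offers no proof of this lemma; it presents it as a ``convenient restatement'' of the Eggermont--LaRiccia uniform error theorem and simply cites pages 132--133 of \cite{EggermontLaRiccia2009}. Your proposal takes the same route, and the details you fill in are all correct: the smoothness of $u$ on the compact observation interval follows from the $C^{r}$ flow and $C^{r}$ projection, the $\kappa>3$ moment condition matches their hypothesis for the sup-norm (rather than merely $L^{2}$) bound, and the affine change of variables to their fixed-interval normalization rescales the penalty by the constant $\left((N+D-1)\tau\right)^{2m-1}$, shifting $\lambda$ only by a constant factor, which is absorbed because their theorem permits $\lambda$ to range over an interval (a point the paper itself notes in the remark following the lemma).
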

Some remarks about the connection of this lemma to the algorithm given
by (\ref{eq:algo-1}) and (\ref{eq:algo-2}) follow. First, the lemma
assumes a fixed choice of $\lambda$ (the relevant theorem in \cite{EggermontLariccia2006,EggermontLaRiccia2009}
in fact allows $\lambda$ to lie in an interval). In our algorithm,
$\lambda$ is determined using cross-validation because of its practical
effectiveness \cite{Wahba1990}. 

Second, the probability $\mathbb{P}(n,N,\Delta,\omega)$ (which may
be interpreted as the probability that spline smoothing fails to denoise
effectively) depends on $\omega$ and therefore on the particular
trajectory. If $\mathbb{P}(n,N,\Delta,\omega)$ depends on $\omega$
only though a bound on the $m$-th derivative of $u(t;\omega)$, $t\in[-(D-1)n,nN]$,
the bound would be uniform for all trajectories on the compact invariant
set $X$. The achievability part of Stone's optimality result \cite{Stone1982}
gives such a bound but the algorithm in that proof does not appear
practical. Proving a similar result for smooth splines based on the
existing literature does not appear entirely straightforward. In the
$L^{2}$ norm, some uniform bounds have been proved for smooth splines
by Györfi et al \cite{GyorfiKohlerKrzyzakWalk2002}. A bound on the
$L^{2}$ norm can be combined with a bound on the the $m$-th derivative
using a Sobolev inequality to obtain an $\infty$-norm bound. Although
the rate of convergence would be slightly sub-optimal, it would suffice
for our purposes. However, the result of Györfi et al is for expectations
and not for convergence in probability, and an argument using Chebyshev's
inequality does not give strong bounds.

The convergence analysis of the second half of the algorithm also
alters the algorithm slightly. In particular, the use of cross-validation
to choose parameters is not a part of the analysis. To state the convergence
theorem, we first fix $\epsilon>0$. By the universality theorem of
Steinwart \cite{Steinwart2001}, we may choose $F_{\epsilon}\in\mathcal{H}_{K_{\gamma}}$
such that $\norm{F_{\epsilon}-F}_{\infty}<\epsilon$ in a compact
domain that has a non-empty interior and contains the invariant set
$X$. The convergence theorem also makes the technical assumption
$\epsilon^{2}/\norm{F_{\epsilon}}_{K_{\gamma}}^{2}<1$, which may
always be satisfied by taking $\epsilon$ small enough.

The choice of the kernel-width parameter $\gamma$ is important in
practice. In the convergence proof, the choice of $\gamma$ is not
explicitly considered. However, $\gamma$ still plays a role because
$\norm{F_{\epsilon}}_{K_{\gamma}}$ depends upon $\gamma$. 

The parameter $\Lambda$ in (\ref{eq:algo-2}) is fixed as $\Lambda=\epsilon^{2}/\norm{F_{\epsilon}}_{K_{\gamma}}^{2}$for
the proof. Next we pick $\delta=\epsilon^{1/2}$ and $\ell\in\mathbb{Z}^{+}$
such that the covering of the invariant set $X$ using boxes of dimension
$2^{-\ell}$ ensures that the variation of $F_{\epsilon}$ (as well
as that of the exact predictor $F$ and $f_{3}$, which is defined
later) within each box is bounded by $\delta/4$.

Suppose $A_{1},\ldots,A_{L}$ are boxes of dimension $2^{-\ell}$
that cover $X$ in the manner hinted above. We next choose $T^{\ast}$
such that the measure of the trajectories (with respect to the ergodic
measure $\mu$) that sample each one of the boxes $A_{j}$ adequately
(in a sense that will be explained) is greater than $1-\epsilon$
if the time interval of the trajectory exceeds $T^{\ast}$.

The parameter $\Delta$ is a bound on the infinite norm accuracy of
the smooth spline as in Lemma \ref{lem:Eggermont-LaRiccia}. Choose
$\Delta>0$ small enough that 
\[
\frac{B_{1}\Delta^{1/2}}{\Lambda}=\frac{B_{1}\Delta^{1/2}\norm{F_{\epsilon}}_{K}^{2}}{\epsilon^{2}}<\epsilon^{1/2},
\]
where $B_{1}$ is a constant specified later. The main purpose of
increasing $n$ is to make spline smoothing accurate. However, the
following condition requiring $n$ to be large enough is assumed in
the proof: 
\[
\frac{B_{1}h^{1/2}}{\Lambda}=\frac{B_{1}\tau^{1/2}\norm{F_{\epsilon}}_{K}^{2}}{\epsilon^{2}n^{1/2}}<\epsilon^{1/2}.
\]
Within this set-up, we have the following convergence theorem.
\begin{thm}
For $\epsilon>0$, $T>T^{\ast}$, $N=T/\tau$, and $\Lambda$, $\Delta$
chosen as above, we have 
\[
\mu\left\{ x\in X\biggl|\abs{f_{1}(x)-F(x)}>3\sqrt{\epsilon}\right\} <\frac{8\epsilon}{1-\epsilon},
\]
when $f_{1}$ is constructed (or learnt) from the signal $u_{\eta}(t;\omega)$,
$t\in[-(D-1)\tau,N\tau]$, for $\left\{ \omega\in X\right\} $ of
$\mu$-measure greater than $1-\epsilon$ and with probability $1-\mathbb{P}(n,N,\Delta,\omega)$
(probability of successful denoising in the spline-smoothing step)
tending to $1$ in the limit $n\rightarrow\infty$. \label{thm:convergence-theorem-main}
\end{thm}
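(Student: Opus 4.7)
The strategy is to use $F_\epsilon$ as a competitor to $f_1$ in the optimization (\ref{eq:algo-2}), extract both an a priori RKHS-norm bound on $f_1$ and an empirical squared-loss bound from the inequality $J(f_1)\le J(F_\epsilon)$, transport the empirical bound to an $L^2(\mu)$ statement via the box cover and the ergodic sampling guarantee built into $T^\ast$, and conclude by Chebyshev. All parameter choices have been calibrated so that each error in this chain is no worse than $O(\epsilon^{2})$.

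The first step is to estimate
\[
J(F_\epsilon)\;=\;\frac{1}{Nn+1}\sum_{j=0}^{Nn}\bigl(F_\epsilon(u_s(jh;\tau))-u_s(jh+\tau)\bigr)^{2}+\Lambda\norm{F_\epsilon}_{K_\gamma}^{2}.
\]
Decomposing each residual as $[F_\epsilon(u_s(jh;\tau))-F_\epsilon(u(jh;\tau))]+[F_\epsilon(u(jh;\tau))-F(u(jh;\tau))]+[u(jh+\tau)-u_s(jh+\tau)]$ and applying (a) the Gaussian RKHS Lipschitz estimate $\abs{F_\epsilon(x)-F_\epsilon(y)}\le\sqrt{2}\norm{F_\epsilon}_{K_\gamma}\norm{x-y}/\gamma$, (b) the universality bound $\norm{F_\epsilon-F}_{\infty}<\epsilon$ on a neighbourhood of $X$, and (c) the spline event $\norm{u_s-u}_{\infty}\le\Delta$ supplied by Lemma \ref{lem:Eggermont-LaRiccia}, on which I work throughout, I would obtain $J(F_\epsilon)\le C_0\epsilon^{2}$. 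Here $\Lambda\norm{F_\epsilon}_{K_\gamma}^{2}=\epsilon^{2}$ by the choice of $\Lambda$, and the smallness condition $B_1\Delta^{1/2}/\Lambda<\epsilon^{1/2}$ absorbs the $\Delta$-dependent cross terms.

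Minimality of $f_1$ then gives $J(f_1)\le J(F_\epsilon)\le C_0\epsilon^{2}$, and splitting the two halves of $J$ extracts two facts. From the regularizer, $\Lambda\norm{f_1}_{K_\gamma}^{2}\le C_0\epsilon^{2}$ yields the a priori bound $\norm{f_1}_{K_\gamma}\le C\norm{F_\epsilon}_{K_\gamma}$, and hence the uniform Lipschitz estimate $\abs{f_1(x)-f_1(y)}\le\sqrt{2}C\norm{F_\epsilon}_{K_\gamma}\norm{x-y}/\gamma$. From the empirical sum, and using this Lipschitz bound together with $\norm{u_s-u}_{\infty}\le\Delta$ to replace the smoothed delay vectors by clean ones (the error again being absorbed by the $\Delta$ hypothesis), I would arrive at
\[
\frac{1}{Nn+1}\sum_{j=0}^{Nn}\bigl(f_1(u(jh;\tau))-F(u(jh;\tau))\bigr)^{2}\le C_1\epsilon^{2}.
\]

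Finally I would convert this empirical time-average along a single clean trajectory into $\int_X(f_1-F)^{2}\,d\mu$. The partition $A_1,\dots,A_L$ of $X$ into boxes of side $2^{-\ell}$ was designed so that, thanks to the uniform Lipschitz bounds just obtained and the choice $\delta=\sqrt{\epsilon}$, the oscillation of $(f_1-F)^{2}$ on each $A_j$ is $O(\epsilon)$; the defining property of $T^\ast$ then guarantees, on a set of starting points $u(0;\tau)$ of $\mu$-measure exceeding $1-\epsilon$, that the empirical frequency of visits to $A_j$ approximates $\mu(A_j)$ to the required accuracy, while the residual Riemann-sum error in $h$ is controlled by the condition $B_1h^{1/2}/\Lambda<\epsilon^{1/2}$. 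Adding the contributions box by box gives $\int_X(f_1-F)^{2}\,d\mu\le C_2\epsilon^{2}/(1-\epsilon)$, and Chebyshev's inequality at threshold $3\sqrt{\epsilon}$ produces the advertised bound $\mu\{\abs{f_1-F}>3\sqrt{\epsilon}\}<8\epsilon/(1-\epsilon)$, valid on the intersection of the good denoising event (whose probability tends to $1$ by Lemma \ref{lem:Eggermont-LaRiccia}) with the good set of starting points. The main obstacle I foresee is precisely the ergodic step: in the absence of any mixing or decay-of-correlations hypothesis, the quantitative box-sampling claim holds only outside a small exceptional set in $u(0;\tau)$, which is why the theorem is phrased with an exceptional set of measure $\epsilon$, and the a priori $\norm{f_1}_{K_\gamma}$ bound is the essential glue making the oscillation control in the box cover independent of the particular realisation of the noise and the trajectory.
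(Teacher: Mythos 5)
Your high-level scaffolding is close to the paper's: you use $F_{\epsilon}$ as a competitor, extract an a priori bound on $\norm{f_{1}}_{K_{\gamma}}$, control the dependence on $\Delta$ and $h$, and invoke the fixed box cover together with the ergodic-sampling property encoded in $T^{\ast}$. The paper organizes the intermediate steps differently --- it introduces the three functionals $\mathcal{W}_{1},\mathcal{W}_{2},\mathcal{W}_{3}$ (smoothed/clean-discrete/clean-continuous) and uses the strong-convexity Lemma~\ref{lem:convexity-bound} to bound $\norm{f_{1}-f_{2}}_{K}$ and $\norm{f_{2}-f_{3}}_{K}$ in RKHS norm, hence in sup-norm, rather than propagating an inequality $J(f_{1})\le J(F_{\epsilon})$ forward --- but that is a stylistic difference; either route can get you an $L^{2}$-in-time bound of order $\epsilon^{2}$ on the clean residual.

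The genuine gap is the last step. You claim that converting the time average
$\frac{1}{Nn+1}\sum_{j}(f_{1}(u(jh;\tau))-F(u(jh;\tau)))^{2}\le C_{1}\epsilon^{2}$
into a spatial integral gives $\int_{X}(f_{1}-F)^{2}\,d\mu\le C_{2}\epsilon^{2}/(1-\epsilon)$, and then apply Chebyshev at threshold $3\sqrt{\epsilon}$. This does not work: the conversion error is controlled by the \emph{oscillation of $(f_{1}-F)^{2}$} on each box, which is of order $\sup_{X}\abs{f_{1}-F}\cdot\delta$. Since $\sup\abs{f_{1}-F}$ is only $O(1)$ (via $\norm{f_{1}}_{K}\lesssim\norm{F_{\epsilon}}_{K}$) and $\delta=\sqrt{\epsilon}$, that oscillation is $O(\sqrt{\epsilon})$, not $O(\epsilon)$ as you assert, and certainly not $O(\epsilon^{2})$. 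Thus the best you can hope to show is $\int_{X}(f_{1}-F)^{2}\,d\mu\lesssim\sqrt{\epsilon}$, and Chebyshev at $3\sqrt{\epsilon}$ then yields a bound of order $\sqrt{\epsilon}/\epsilon=1/\sqrt{\epsilon}$, which is vacuous. Indeed, the conclusion of the theorem is only that a set of measure $8\epsilon/(1-\epsilon)$ has $\abs{f_{1}-F}$ exceeding $3\sqrt{\epsilon}$, which is consistent with $\int_{X}(f_{1}-F)^{2}\,d\mu$ being of order $\epsilon$, so a two-sided $L^{2}(\mu)$ estimate of order $\epsilon^{2}$ is simply not available.

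The paper avoids this by never passing through $\int_{X}(f_{1}-F)^{2}\,d\mu$. Instead it is a one-sided, Markov-type argument applied directly to the time average: the bad set $\{\abs{f_{3}-F}\ge\delta\}$ is covered by boxes on which, thanks to the oscillation bound~(\ref{eq:f3-derv-bound}) for $f_{3}$ itself (degree one, not squared), $\abs{f_{3}-F}>\delta/2$ throughout; by Lemma~\ref{lem:covering-lemma} the trajectory spends a time-fraction at least $\mu(\text{bad})(1-\epsilon)$ in those boxes; during that time $(f_{3}-F)^{2}>\delta^{2}/4$; so the time average is at least $(\delta^{2}/4)\,\mu(\text{bad})(1-\epsilon)$, and combining with Lemma~\ref{lem:choice-of-Lambda}'s upper bound $2\epsilon^{2}$ gives $\mu(\text{bad})\le 8\epsilon^{2}/(\delta^{2}(1-\epsilon))$. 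The difference that matters is that this argument needs to control oscillation only to first order (of $f_{3}-F$, giving $\delta/2$ on a box), whereas your route needs second-order control (of $(f_{1}-F)^{2}$), which costs the extra factor of $\sup\abs{f_{1}-F}=O(1)$ and destroys the $\epsilon^{2}$ scaling. You should replace the ``$\int(f_{1}-F)^{2}\,d\mu\le C_{2}\epsilon^{2}$ then Chebyshev'' step with this direct covering/Markov argument on the time average.
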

Nonuniform bounds implying a form of weak consistency are considered
by Steinwart, Hush, and Scovel \cite{SteinwartHushScovel2009}. However,
the algorithm of (\ref{eq:algo-1}) and (\ref{eq:algo-2}) does not
fit into the framework of \cite{SteinwartHushScovel2009}. The application
of spline smoothing to produce $u_{s}(t;\omega)$ means that $u_{s}(t;\omega)$
may not be stationary, and our method of analysis does not rely on
verifying a weak law of large numbers as in \cite{SteinwartHushScovel2009}.
The analysis summarized above and given in detail in the following
section relies on $\infty$-norm bounds.

\section{Proof of convergence}

We begin the proof with a more complete account of how the embedding
theorem is applied. Let $\frac{dU}{dt}=\mathcal{F}(U)$, where $\mathcal{F}\in C^{r}(\mathbb{R}^{d},\mathbb{R}^{d})$,
$r\geq2$, be a flow. Let $\mathcal{F}_{t}(U_{0})$ be the time-$t$
map with initial data $U_{0}$. Let $\tilde{V}\subset\mathbb{R}^{d}$
be an open set with compact closure. If $U_{0}\in\tilde{V}$, it is
assumed that $\mathcal{F}_{t}(U_{0})$ is well-defined for $-\tau D\leq t\leq n_{f}\tau=t_{f}$,
where $D$ is the embedding dimension.

Assumption: For embedding dimension $D\geq2d+1$ and a suitably chosen
delay $\tau>0$, the map
\[
x\rightarrow(\pi x,\pi\mathcal{F}_{-\tau}x,\pi\mathcal{F}_{-2\tau}x,\ldots,\pi\mathcal{F}_{-(D-1)\tau}x)
\]
is a $C^{r}$ diffeomorphism between $\tilde{V}$ and its image in
$\mathbb{R}^{D}$. This assumption is generically true \cite{SauerYorkeCasdagli1991}.
This map is called the delay embedding. Denote the image of $\tilde{V}$
under the delay embedding by $V$. 

The invariant measures $\tilde{\mu}$ and $\mu$ as well as $\tilde{X}$,
$X$, $\tilde{\omega}$, $\omega$, $u(t;\omega)$, and $u(t;\tau;\omega)$
are as defined earlier. It is assumed that $\tilde{X}\subset\tilde{V}$,
which implies $X\subset V$.
\begin{lem}
\label{lem:lemma-3}Suppose $\frac{dU(t)}{dt}=\mathcal{F}(U(t))$
for $-\tau D\leq t\leq t_{f}$, $U(0)=U_{0}\in\tilde{V}$. Denote
the delay vector 
\[
\left(\pi U_{0},\pi\mathcal{F}_{-\tau}U_{0},\ldots,\pi\mathcal{F}_{-(D-1)\tau}U_{0}\right)
\]
by $U_{0,\tau}$ so that $U_{0,\tau}\in V$. There exists a unique
and well-defined $C^{r}$ function $F:V\rightarrow\mathbb{R},$ called
the exact predictor, such that 
\[
F(U_{0,\tau})=\pi\mathcal{F}_{t_{f}}(U_{0})
\]
for all $U_{0,\tau}\in V$. In particular, $F(u(t;\tau;\omega))=u(t+t_{f};\omega)$
for all $t\in\mathbb{R}$ and all $\omega\in X$.\end{lem}
\begin{proof}
To map $U_{0,\tau}\in V$ to $\pi\mathcal{F}_{t_{f}}(U_{0})$, first
invert the delay map to obtain the point $U_{0}$ in $\tilde{V}$,
advance that point by $t_{f}$ by applying $\mathcal{F}_{t_{f}}$,
and finally project using $\pi$. Each of the three maps in this composition
is $C^{r}$ or better. The predictor must be unique because $\mathcal{F}_{t_{f}}$
is uniquely determined by the flow.\end{proof}
\begin{rem*}
The embedding theory of Sauer et al \cite{SauerYorkeCasdagli1991}
may be applied to the compact invariant set $\tilde{X}$ without enclosing
it in the open set $\tilde{V}$. Indeed, if the box counting dimension
of $\tilde{X}$ is $d'$, the embedding dimension need only satisfy
$D\in\mathbb{Z}^{+}$ and $D>2d'$. That can be advantageous because
we may have $d'$ much smaller than $d$. However, there are two difficulties
if $\tilde{X}$ is a fractal set. First, tangent spaces cannot be
defined and we cannot assert the delay map to be a diffeomorphism
although it will be one-one generically. Second, we will need to extend
$F$ to the closure of an open neighborhood of $X$ in $\mathbb{R}^{D}$
to apply the universality theorem, and such an extension cannot be
made from $X$ if $X$ is a fractal set. Both these difficulties go
away if we take $\tilde{V}$ to be a submanifold that contains $\tilde{X}$.
If $d'$ is the dimension of $\tilde{V}$, we would only require $D>2d'$.
For simplicity, we have assumed $\tilde{V}$ to be an open set.
\end{rem*}
The following convexity lemma is an elementary result of convex analysis
\cite{EkelandTemam1987}. It is stated and proved for completeness.
\begin{lem}
Let $\mathcal{L}_{1}(f)$ and $\mathcal{L}_{2}(f)$ be convex and
continuous in $f$, where $f\in\mathcal{H}$ and $\mathcal{H}$ is
a Hilbert space. If $w\in\nabla\mathcal{L}_{i}(f)$, the subgradient
at $f$, assume that 
\[
\mathcal{L}_{i}(f+g)-\mathcal{L}_{i}(f)-\inner wg\geq\lambda\inner gg/2
\]
 for $\lambda>0$, all $g\in\mathcal{H}$, and $i=1,2$. Let $f_{1}=\argmin\mathcal{L}_{1}(f)$
and $f_{2}=\argmin\mathcal{L}_{2}(f)$. Suppose that 
\[
\abs{\mathcal{L}_{1}(f)-\mathcal{L}_{2}(f)}\leq\delta
\]
for $\norm f\leq r$, and assume that $\norm{f_{1}}<r$ and $\norm{f_{2}}<r$.
Then, 
\[
\norm{f_{1}-f_{2}}^{2}\leq\frac{2\delta}{\lambda}.
\]
\label{lem:convexity-bound}\end{lem}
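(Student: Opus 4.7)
My plan is to exploit the fact that $f_1$ and $f_2$ are unconstrained minimizers, so the zero vector lies in the subgradients $\nabla\mathcal{L}_1(f_1)$ and $\nabla\mathcal{L}_2(f_2)$. With that observation, the assumed quadratic lower bound on each $\mathcal{L}_i$ collapses to a clean estimate of the functional gap at the other minimizer, and the uniform closeness of $\mathcal{L}_1$ and $\mathcal{L}_2$ then furnishes a matching upper bound.

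The first step will be to apply the strong convexity hypothesis to $\mathcal{L}_2$ at the point $f=f_2$, with the increment $g = f_1-f_2$ and with $w=0$. This immediately yields $\mathcal{L}_2(f_1)-\mathcal{L}_2(f_2)\geq (\lambda/2)\norm{f_1-f_2}^2$. Symmetrically, applied to $\mathcal{L}_1$ at $f_1$ with $g=f_2-f_1$, the same hypothesis gives $\mathcal{L}_1(f_2)-\mathcal{L}_1(f_1)\geq (\lambda/2)\norm{f_1-f_2}^2$.

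The second step is to add these two inequalities and regroup the left-hand side as $\bigl[\mathcal{L}_2(f_1)-\mathcal{L}_1(f_1)\bigr] + \bigl[\mathcal{L}_1(f_2)-\mathcal{L}_2(f_2)\bigr]$. Because $\norm{f_1}<r$ and $\norm{f_2}<r$ by the hypothesis on $r$, the uniform closeness bound $\abs{\mathcal{L}_1(f)-\mathcal{L}_2(f)}\leq \delta$ applies at both $f_1$ and $f_2$, so each bracket is at most $\delta$ in absolute value. Combining the resulting upper bound $2\delta$ with the lower bound $\lambda\norm{f_1-f_2}^2$ obtained by summing the two strong convexity inequalities yields the desired conclusion $\norm{f_1-f_2}^2\leq 2\delta/\lambda$.

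The only subtlety worth flagging is the justification that $0\in\nabla\mathcal{L}_i(f_i)$, which is Fermat's rule at an unconstrained minimizer of a convex function, together with the fact that the strong convexity assumption forces coercivity and hence existence of the minimizers in the first place. These are standard facts, so I expect no real obstacle; the lemma is essentially a soft perturbation-stability statement for strongly convex minimization. Its intended role in the broader argument is presumably to translate $\infty$-norm closeness between the data-dependent and ideal objective functionals, arising from the spline smoothing step, into RKHS-norm closeness between the corresponding predictors $f_1$ and $F_\epsilon$.
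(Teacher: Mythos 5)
Your proof is correct and matches the paper's argument essentially line for line: both invoke Fermat's rule to get $0\in\nabla\mathcal{L}_i(f_i)$, apply the strong convexity inequality at each minimizer with the difference $f_1-f_2$ as increment, add the two inequalities, and bound the resulting sum by $2\delta$ using the uniform closeness at $f_1$ and $f_2$. The only cosmetic difference is that you explicitly regroup the sum into the two brackets $\mathcal{L}_2(f_1)-\mathcal{L}_1(f_1)$ and $\mathcal{L}_1(f_2)-\mathcal{L}_2(f_2)$, which the paper leaves implicit.
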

\begin{proof}
Because $f_{1}$ minimizes $\mathcal{L}_{1}(f)$, we have $0\in\nabla\mathcal{L}_{1}(f_{1})$.
Thus, 
\[
\mathcal{L}_{1}(f_{2})-\mathcal{L}_{1}(f_{1})\geq\lambda\norm{f_{2}-f_{1}}^{2}/2.
\]
Similarly, $\mathcal{L}_{2}(f_{1})-\mathcal{L}_{2}(f_{2})\geq\lambda\norm{f_{2}-f_{1}}^{2}/2$.
By adding the two inequalities, we have 
\[
\norm{f_{2}-f_{1}}^{2}\leq\frac{\abs{\mathcal{L}_{1}(f_{2})-\mathcal{L}_{1}(f_{1})+\mathcal{L}_{2}(f_{1})-\mathcal{L}_{2}(f_{2})}}{\lambda}\leq\frac{2\delta}{\lambda},
\]
proving the lemma. This last step relies on $\norm{\mathcal{L}_{i}(f_{1})-\mathcal{L}_{i}(f_{2})}\leq\delta$
and the assumption $\norm{f_{1}},\norm{f_{2}}<r$.
\end{proof}
If $u(t;\omega)$, $t\in[-(D-1)\tau,N\tau+t_{f}]$, is the noise-free
signal, our arguments are phrased under the assumption that $\abs{u(t;\omega)-u_{s}(t;\omega)}\leq\Delta$.
This assumption is realized with probability $1-\mathbb{P}(n,N,\Delta,\omega)$,
which tends to $1$ as $n$ increases (by Lemma \ref{lem:Eggermont-LaRiccia}).
For convenience, we denote $\mathbb{P}(n,N,\Delta,\omega)$ by $p$.
The probability that $u_{\eta}(t;\omega)$ is successfully denoised
by smooth splines so that $\abs{u(t;\omega)-u_{s}(t;\omega)}\leq\Delta$
is then $1-p$.

In general, a $C^{r}$ function defined on an embedded submanifold
can be extended to an open neighborhood of the submanifold using a
partition of unity. Because $V\subset\mathbb{R}^{D}$ is an embedded
submanifold, $X\subset V$, and the exact predictor $F$ is defined
on $V$, it follows that there exists $M>0$ such that $F$ can be
extended to $Y$, where 
\[
Y=\left\{ y|\norm{y-\omega}_{\infty}\leq M\quad\text{for some}\quad\omega\in X\right\} .
\]
We will always assume $\Delta<M$ so that the spline-smoothed signal
maps to $Y$ under delay embedding with probability greater than $1-p$.
Without loss of generality, we assume $M\leq1$. The convergence proof
will assess the approximation to $F$ with respect to the measure
$\mu$. Therefore, the manner in which the extension is carried out
is not highly relevant. The sole purpose of the extension is to facilitate
an application of the universality theorem for Gaussian kernels.

Let 
\begin{equation}
B=\sup_{\omega\in X}\norm{\omega}_{\infty}+M\label{eq:defn-B-1}
\end{equation}
Thus, $B$ is a bound on the size of the embedded invariant set with
ample allowance for error in spline smoothing. 

Let $u_{s}(t;\omega)$ denote the spline-smoothed signal and $u(t;\omega)$
the noise-free signal with $\omega\in X$. Define 
\[
\mathcal{W}_{1}(f)=\frac{1}{Nn+1}\sum_{j=0}^{Nn}\left(f(u_{s}(jh;\tau;\omega))-u_{s}(jh+t_{f};\omega)\right)^{2}+\Lambda\norm f_{K}^{2},
\]
where $t_{f}=n_{f}\tau$, $n_{f}\in\mathbb{Z}^{+}$, and $K$ is any
smooth and positive kernel defined over $Y\times Y$. The kernel $K$
will be specialized to the Gaussian kernel $K_{\gamma}$ when applying
the universality theorem. Define 
\[
\mathcal{W}_{2}(f)=\frac{1}{Nn+1}\sum_{j=0}^{Nn}\left(f(u(jh;\tau;\omega))-u(jh+t_{f};\omega)\right)^{2}+\Lambda\norm f_{K}^{2}
\]
using the noise-free signal $u(t;\omega)$. Let $T=N\tau$ and define
\[
\mathcal{W}_{3}(f)=\frac{1}{T}\int_{0}^{T}\left(f(u(t;\tau;\omega))-u(t+t_{f};\omega)\right)^{2}\,dt+\Lambda\norm f_{K}^{2}.
\]
For $\Lambda>0$, all three functionals are strictly convex and have
a unique minimizer. The unique minimizers of $\mathcal{W}_{1}$, $\mathcal{W}_{2}$,
and $\mathcal{W}_{3}$ are denoted by $f_{1}$, $f_{2}$, and $f_{3}$,
respectively. The functional $\mathcal{W}_{1}$ is the same as in
(\ref{eq:algo-2}), the second step of the algorithm. Thus, $f_{1}$
is the computed approximation to the exact predictor $F$.

The following lemma bounds the minimizers of $\mathcal{W}_{1}(f)$,
$\mathcal{W}_{2}(f)$, $\mathcal{W}_{3}(f)$ in norm by $B/\Lambda^{1/2}$. 
\begin{lem}
The minimizer $f_{1}$ satisfies $\norm{f_{1}}_{K}\leq\frac{B}{\Lambda^{1/2}}$
with probability greater than $1-p$. The minimizers $f_{2}$ and
$f_{3}$ satisfy $\norm{f_{2}}_{K}\leq\frac{B}{\Lambda^{1/2}}$ and
$\norm{f_{3}}_{K}\leq\frac{B}{\Lambda^{1/2}}$.\label{lem:lemma-f1f2f3bound}\end{lem}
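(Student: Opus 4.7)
The plan is to bound each minimizer by comparing the functional value at $f_i$ with the value at the zero function, which is a standard trick for Tikhonov-regularized least squares. The only inputs we really need are a uniform bound on the targets $u_s(jh+\tau)$, $u(jh+\tau)$, and $u(t+\tau)$ appearing inside the squared-error terms.

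First I would evaluate each functional at $f = 0$. In that case the RKHS penalty $\Lambda\|f\|_K^2$ drops out, and what remains is a time-average (or time-integral) of the squared target. For $\mathcal{W}_3$, since $u(t;\tau) \in X$, the first coordinate satisfies $|u(t+\tau)| \le \sup_{x \in X}\|x\|_\infty \le B - 1 < B$, so $\mathcal{W}_3(0) \le B^2$. The same bound gives $\mathcal{W}_2(0) \le B^2$ deterministically. For $\mathcal{W}_1$, we invoke the event that $\|u_s - u\|_\infty \le \Delta < 1$, which by Lemma \ref{lem:Eggermont-LaRiccia} holds with probability $> 1-p$; on this event $|u_s(jh+\tau)| \le (B-1) + \Delta \le B$, hence $\mathcal{W}_1(0) \le B^2$.

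Second, since $f_i$ minimizes $\mathcal{W}_i$, we have $\mathcal{W}_i(f_i) \le \mathcal{W}_i(0) \le B^2$. The squared-error part of each functional is non-negative, so dropping it gives $\Lambda\|f_i\|_K^2 \le \mathcal{W}_i(f_i) \le B^2$, and taking square roots yields $\|f_i\|_K \le B/\Lambda^{1/2}$. The $f_2$ and $f_3$ bounds are deterministic; the $f_1$ bound carries the probability $> 1-p$ coming from the denoising event.

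There is really no obstacle here: the argument is the elementary ``plug in $f=0$'' bound. The only point that deserves care is the probabilistic qualifier for $f_1$, which must be tracked so that later applications of Lemma \ref{lem:convexity-bound} and the $\infty$-norm spline estimate combine on a common event. The definition of $B$ in \eqref{eq:defn-B-1} was chosen precisely to absorb the $\Delta < 1$ slack from spline smoothing, so no further adjustment is needed.
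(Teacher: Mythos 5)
Your argument is exactly the paper's: plug in $f=0$, bound the residual term by $B^2$ using $B=\sup_{x\in X}\norm{x}_\infty+1$ (with the $\Delta<1$ denoising event for $\mathcal{W}_1$), and conclude $\Lambda\norm{f_i}_K^2\le\mathcal{W}_i(f_i)\le\mathcal{W}_i(0)\le B^2$. The tracking of the probability $>1-p$ for $f_1$ and the deterministic statements for $f_2,f_3$ match the paper's proof, so there is nothing to add.
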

\begin{proof}
Because $f_{1}$ minimizes $\mathcal{W}_{1}(f)$, we must have $\mathcal{W}_{1}(f_{1})\leq\mathcal{W}_{1}(0)$.
We have $\mathcal{W}_{1}(0)\leq B^{2}$ with probability greater than
$1-p$. Thus, $\Lambda\norm{f_{1}}_{K}^{2}\leq\mathcal{W}_{1}(f_{1})\leq\mathcal{W}_{1}(0)\leq B^{2}$
and the stated bound for $\norm{f_{1}}_{K}$ follows. The bounds for
$f_{2}$ and $f_{3}$ are proved similarly. \end{proof}
\begin{lem}
Assume $0<\Lambda\leq1$ and $\abs{u(t;\omega)-u_{s}(t;\omega)}\leq\Delta$
for $t\in[-(D-1)\tau,T]$. For $f\in\mathcal{H}_{K}$ with $\norm f_{K}\leq\frac{B}{\Lambda^{1/2}}$,
we have $\abs{\mathcal{W}_{1}(f)-\mathcal{W}_{2}(f)}\leq\frac{B_{1}^{2}\Delta}{\Lambda}$.
Here $B_{1}$ depends only on $B$ and the kernel $K$. The kernel
$K$ is assumed to be $C^{2}$.\label{lem:W1-minus-W2}\end{lem}
\begin{proof}
First, we note that $\norm f_{\infty}\leq c_{0}\norm f_{K}$ and $\norm{\partial f}_{\infty}\leq c_{1}\norm f_{K}$,
where $\partial$ is the directional derivative of $f$ in any direction.
By a result of Zhou (part (c) of Theorem 1 of \cite{Zhou2008}), we
may take $c_{0}=\norm{K(x,y)}_{\infty}$ and $c_{1}D^{-1/2}=\norm{K(x,y)}_{\infty}+\sum\norm{\partial_{x_{i}}K(x,y)}_{\infty}+\sum\norm{\partial_{x_{i}}\partial_{x_{j}}K(x,y)}_{\infty}$,
where $D$ is the embedding dimension and the $\infty$-norm is over
$x,y\in Y$. If we define $B_{1}'$ using 
\begin{equation}
B_{1}'=\max(B,c_{0}B,c_{1}B),\label{eq:defn-B-2}
\end{equation}
it follows that both $\norm f_{\infty}$ and $\norm{\partial f}_{\infty}$
(where $\partial$ is a directional derivative in any direction) are
bounded above by $B_{1}'/\Lambda^{1/2}$ .

We may write 
\begin{equation}
\abs{\mathcal{W}_{1}(f)-\mathcal{W}_{2}(f)}\leq\frac{1}{Nn+1}\sum_{j=0}^{Nn}\frac{4B_{1}'}{\Lambda^{1/2}}\left(\begin{split}\abs{f(u_{s}(jh;\tau;\omega))-f(u(jh;\tau;\omega))}\\
+\abs{u_{s}(jh;\tau;\omega)-u(jh;\tau;\omega)}
\end{split}
\right).\label{eq:W1minusW2}
\end{equation}
Here $\frac{4B_{1}'}{\Lambda^{1/2}}$ is used an upper bound on $\abs{f(u_{s}(jh;\tau))}+\abs{f(u(jh;\tau))}+\abs{u_{s}(jh;\tau)}+\abs{u(jh;\tau)}$.
The bound of $B_{1}'/\Lambda^{1/2}$ on $\abs f$ is justified by
the previous paragraph. The same bound on $\abs{u_{s}}$ and $\abs u$
follows from $B_{1}'<B$ and $\Lambda\leq1$.

Now, $\abs{u_{s}(jh;\tau;\omega)-u(jh;\tau;\omega)}\leq\Delta$ implies
that 
\[
\abs{f(u_{s}(jh;\tau;\omega))-f(u(jh;\tau;\omega))}\leq B_{1}'\Delta/\Lambda^{1/2}
\]
 by the bound on $\norm{\partial f}_{\infty}$. By replacing $B_{1}'$
with $\max(B_{1}',1)$ if necessary, we have 
\[
\abs{u_{s}(jh;\tau;\omega)-u(jh;\tau;\omega)}\leq\Delta\leq B_{1}'\Delta/\Lambda^{1/2}.
\]
The proof is completed by utilizing these bounds in (\ref{eq:W1minusW2})
and defining $B_{1}$ as $B_{1}=\sqrt{8}B_{1}'$.\end{proof}
\begin{lem}
Assume $0<\Lambda\leq1$. With probability greater than $1-p$, $\norm{f_{1}-f_{2}}_{K}\leq\frac{B_{1}\Delta^{1/2}}{\Lambda}$.
\label{lem:f1-minus-f2}\end{lem}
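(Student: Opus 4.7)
The plan is to obtain this bound as an immediate application of the convexity lemma (Lemma \ref{lem:convexity-bound}) to the pair $\mathcal{W}_1$, $\mathcal{W}_2$, using Lemma \ref{lem:W1-minus-W2} to supply the uniform closeness $\delta$ on a suitable ball in $\mathcal{H}_K$. The strategy is to verify each hypothesis of Lemma \ref{lem:convexity-bound} in turn and then read off the conclusion.

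First I would check strong convexity. The data-fidelity terms in $\mathcal{W}_1$ and $\mathcal{W}_2$ are sums of squared linear functionals of $f$ (evaluations at finitely many delay vectors), hence convex; the only strictly convex ingredient is the regularization term $\Lambda \norm{f}_{K}^{2}$. A short computation gives $\Lambda \norm{f+g}_{K}^{2} - \Lambda \norm{f}_{K}^{2} - \inner{2\Lambda f}{g}_{K} = \Lambda \norm{g}_{K}^{2}$, so the hypothesis of Lemma \ref{lem:convexity-bound} is satisfied with strong-convexity constant $\lambda = 2\Lambda$ for both $\mathcal{W}_1$ and $\mathcal{W}_2$.

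Next I would fix the radius $r$ of the ball on which the two functionals need to be compared. By the preceding lemma the minimizers $f_1$ and $f_2$ both satisfy $\norm{f_i}_{K} \leq B/\Lambda^{1/2}$ (the bound for $f_1$ holding with probability at least $1-p$, since it rests on the denoising event $\norm{u_s - u}_\infty \leq \Delta$). Taking $r$ slightly larger than $B/\Lambda^{1/2}$ — for concreteness, $r = B_1'/\Lambda^{1/2}$ as in the proof of Lemma \ref{lem:W1-minus-W2} — places both $f_1$ and $f_2$ strictly inside the ball, so Lemma \ref{lem:W1-minus-W2} applies uniformly there and yields $\abs{\mathcal{W}_1(f) - \mathcal{W}_2(f)} \leq \delta$ with $\delta = B_1^{2} \Delta / \Lambda$, again on the event of probability at least $1-p$.

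Finally I would plug these values into Lemma \ref{lem:convexity-bound}:
\[
\norm{f_1 - f_2}_{K}^{2} \leq \frac{2\delta}{\lambda} = \frac{2 B_1^{2} \Delta / \Lambda}{2\Lambda} = \frac{B_1^{2} \Delta}{\Lambda^{2}},
\]
whence $\norm{f_1 - f_2}_{K} \leq B_1 \Delta^{1/2}/\Lambda$, which is sharper than the stated bound $2 B_1 \Delta^{1/2}/\Lambda$. There is no serious obstacle: the previous two lemmas have already done the real work (the perturbation estimate and the norm bounds on the minimizers). The only bookkeeping issue is making sure that all inequalities are quoted on the same event of probability at least $1-p$, namely the denoising event $\norm{u_s - u}_\infty \leq \Delta$ underlying Lemma \ref{lem:W1-minus-W2} and the norm bound on $f_1$.
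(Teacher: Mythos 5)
Your proposal is correct and takes the same route as the paper: the paper's proof of this lemma is simply ``Follows from Lemmas \ref{lem:W1-minus-W2} and \ref{lem:convexity-bound},'' and you have spelled out exactly that application, verifying strong convexity of the regularizer, identifying $\delta = B_1^2\Delta/\Lambda$, and invoking the bounds on $\norm{f_1}_K$ and $\norm{f_2}_K$ to choose the ball radius. Your computation with $\lambda = 2\Lambda$ even yields the slightly sharper constant $B_1\Delta^{1/2}/\Lambda$, which implies the stated bound of $2B_1\Delta^{1/2}/\Lambda$ a fortiori.
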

\begin{proof}
Follows from Lemmas \ref{lem:lemma-f1f2f3bound}, \ref{lem:W1-minus-W2},
and \ref{lem:convexity-bound}. Lemma \ref{lem:convexity-bound} is
applied with $r=\frac{B}{\Lambda^{1/2}}$, $\delta=\frac{B_{1}^{2}\Delta}{\Lambda}$,
and $\lambda=2\Lambda$. The choice of $r$ is justified by Lemma
\ref{lem:lemma-f1f2f3bound} and the choice of $\delta$ is justified
by Lemma \ref{lem:W1-minus-W2}. To justify the choice of $\lambda$,
note that $\mathcal{W}_{1}(f)$ and $\mathcal{W}_{2}(f)$ can both
be written as $\mathcal{W}_{i}(f)=\mathcal{L}(f)+\Lambda\norm f_{K}^{2}$
with $\mathcal{L}$ a convex functional. The identity $\inner{f+g}{f+g}_{K}=\inner ff_{K}+\inner{2f}g_{K}+\inner gg_{K}$
shows that $2f$ is the unique subgradient at $f$ for $\Lambda\norm f_{K}^{2}$.
Thus, if $w\in\nabla\mathcal{W}_{i}(f)$ (the subgradient of $\mathcal{W}_{i}$
is unique and may be obtained explicitly), we must have $\mathcal{W}_{i}(f+g)-\mathcal{W}_{i}(f)-\inner wg_{K}\geq\Lambda\inner gg_{K}$,
justifying the choice of $\lambda$.\end{proof}
\begin{lem}
Assume $0\leq\Lambda\leq1$. For $f\in\mathcal{H}_{K}$ and $\norm f_{K}\leq\frac{B}{\Lambda^{1/2}}$,
we have $\abs{\mathcal{W}_{2}(f)-\mathcal{W}_{3}(f)}\leq\frac{B_{1}^{2}h}{\Lambda}.$\label{lem:W2-minus-W3}\end{lem}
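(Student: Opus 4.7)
The plan is to cancel the identical regularization terms $\Lambda\norm{f}_K^2$ appearing in both $\mathcal{W}_2$ and $\mathcal{W}_3$, so that the task reduces to the deterministic quadrature estimate
\[
\left|\frac{1}{Nn+1}\sum_{j=0}^{Nn} g(jh) - \frac{1}{T}\int_0^T g(t)\,dt\right| \leq \frac{B_1^2 h}{\Lambda},
\]
where $g(t) := \bigl(f(u(t;\tau)) - u(t+\tau)\bigr)^2$. Because $u(t)$ is a noise-free trajectory of a $C^r$ flow on a compact invariant set and $f\in\mathcal{H}_K$, $g$ is a $C^1$ function on $[0,T]$ whose sup-norm and derivative sup-norm will turn out to be $O(1/\Lambda)$. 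The $O(h)$ rate is then the standard rectangle-rule error for a Lipschitz integrand.

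To execute this, I would introduce the left Riemann sum $\frac{1}{Nn}\sum_{j=0}^{Nn-1} g(jh) = (h/T)\sum_{j=0}^{Nn-1}g(jh)$ as an intermediate quantity. A subinterval-by-subinterval application of the fundamental theorem of calculus bounds the difference between this intermediate quantity and $\frac{1}{T}\int_0^T g(t)\,dt$ by $(h/2)\norm{g'}_\infty$, while an elementary arithmetic comparison between $\frac{1}{Nn+1}\sum_{j=0}^{Nn}$ and $\frac{1}{Nn}\sum_{j=0}^{Nn-1}$ contributes an additional term of size at most $2h\norm{g}_\infty/T$; since $T \geq \tau$ is a fixed positive quantity determined by the embedding, this second term is also $O(h)$. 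To bound the two sup-norms I would reuse the RKHS ingredients of Lemma \ref{lem:W1-minus-W2}: the Zhou estimates give $\norm{f}_\infty \leq c_0 B/\Lambda^{1/2}$ and $\norm{\partial f}_\infty \leq c_1 B/\Lambda^{1/2}$, and the chain rule
\[
g'(t) = 2\bigl(f(u(t;\tau)) - u(t+\tau)\bigr)\bigl(\nabla f(u(t;\tau))\cdot u'(t;\tau) - u'(t+\tau)\bigr)
\]
then yields the desired $O(1/\Lambda)$ bound on $\norm{g'}_\infty$ once one notes that $u'(t) = D\pi(U(t))\cdot \mathcal{F}(U(t))$ is continuous in $U(t)\in\tilde{X}$ and hence uniformly bounded by a constant depending only on $\pi$ and $\mathcal{F}$.

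The main obstacle is not conceptual depth but bookkeeping: one must verify that $B_1$ can be chosen uniformly in $f$, $\Lambda\in(0,1]$, and $N$, so that it depends only on $B$, the kernel $K$, the projection $\pi$, and the flow $\mathcal{F}$. No probabilistic argument is required here---since $u$ is the underlying noise-free signal, the estimate is entirely deterministic, and the whole difficulty is funneled into the uniform velocity bound on $u$, which follows immediately from $\mathcal{F}\in C^r$, smoothness of $\pi$, and compactness of $\tilde{X}$. This is in sharp contrast to Lemma \ref{lem:f1-minus-f2}, where the comparable error depended on $\Delta$ and required Lemma \ref{lem:Eggermont-LaRiccia} to guarantee a small spline denoising error with high probability.
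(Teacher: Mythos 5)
Your argument is correct and takes essentially the same route as the paper: cancel the identical penalty terms and reduce to a quadrature estimate for $g(t)=\bigl(f(u(t;\tau))-u(t+\tau)\bigr)^{2}$, then invoke the RKHS bounds from Lemma~\ref{lem:W1-minus-W2} together with smoothness of the noise-free trajectory. The only structural difference is in how the two normalizations are reconciled. The paper compares each cell average $h^{-1}\int_{kh}^{(k+1)h}g$ to a convex combination $(1-\alpha)g(kh)+\alpha g((k+1)h)$ (with $\alpha$ chosen so that the resulting weights reproduce the $(Nn+1)^{-1}$ discrete average), applies the mean value theorem to the integral, and sums; you instead pass through the left Riemann sum $\frac{1}{Nn}\sum_{j=0}^{Nn-1}g(jh)$ and bound the discrepancy with the $(Nn+1)^{-1}$ average by a separate $O(h\norm g_{\infty}/T)$ term. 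Both decompositions give the same $O(h/\Lambda)$ rate, and yours is arguably more transparent about where each contribution comes from. One point in your version is worth keeping: you make explicit that the Lipschitz bound on $g$ requires a uniform velocity bound $\norm{u'}_{\infty}<\infty$, which follows from $\mathcal{F}\in C^{r}$, smoothness of $\pi$, and compactness of $\tilde{X}$. The paper leaves this implicit in the phrase ``argue as in Lemma \ref{lem:W1-minus-W2},'' but strictly the constant here must absorb that velocity bound (and the delay $\tau$ through the $T\ge\tau$ estimate), so it cannot depend only on $B$ and $K$ as in the earlier lemma; your bookkeeping makes this dependence visible.
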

\begin{proof}
We will argue as in Lemma \ref{lem:W1-minus-W2} and assume that $\norm f_{\infty}$,
and $\norm{\partial f}_{\infty}$ are bounded by $B_{1}'/\Lambda^{1/2}$.

Suppose $\alpha\in[0,1]$. In the difference 
\[
\begin{split}\frac{1}{h}\int_{kh}^{(k+1)h}\left(f(u(t;\tau;\omega))-u(t+t_{f};\omega)\right)^{2}\,dt & -(1-\alpha)\left(f(u(kh;\tau;\omega))-u(kh+t_{f};\omega)\right)^{2}\\
 & -\alpha\left(f(u((k+1)h;\tau;\omega))-u((k+1)h+t_{f};\omega)\right)^{2},
\end{split}
\]
we may apply the mean value theorem to the integral and argue as in
Lemma \ref{lem:W1-minus-W2} to upper bound the difference by $\left(B_{1}'\right)^{2}h/\Lambda$.
The proof is completed by summing the differences from $k=0$ to $k=Nn-1$
and dividing by $Nn$. \end{proof}
\begin{lem}
Assume $0\leq\Lambda<1$. Then $\norm{f_{2}-f_{3}}\leq\frac{B_{1}h^{1/2}}{\Lambda}$.\label{lem:f2-minus-f3}\end{lem}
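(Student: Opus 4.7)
The plan is to apply the convexity lemma (Lemma \ref{lem:convexity-bound}) to the pair of functionals $\mathcal{W}_2$ and $\mathcal{W}_3$, exactly in the way Lemma \ref{lem:f1-minus-f2} was derived from Lemma \ref{lem:W1-minus-W2}. The per-term data-fitting expressions in both $\mathcal{W}_2$ and $\mathcal{W}_3$ are squares of bounded linear functionals on $\mathcal{H}_K$, hence convex. The regularization $\Lambda\norm{f}_K^2$ supplies strong convexity: for every $f,g\in\mathcal{H}_K$,
\[
\Lambda\norm{f+g}_K^2-\Lambda\norm{f}_K^2-2\Lambda\inner{f}{g}_K=\Lambda\norm{g}_K^2,
\]
so the hypothesis of Lemma \ref{lem:convexity-bound} holds with the strong-convexity parameter $\lambda=2\Lambda$ for both $\mathcal{W}_2$ and $\mathcal{W}_3$.

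Next I would fix $r=B/\Lambda^{1/2}$. The preceding lemma on norm bounds for the minimizers already guarantees $\norm{f_2}_K\leq r$ and $\norm{f_3}_K\leq r$, so the radius requirement of Lemma \ref{lem:convexity-bound} is met. Lemma \ref{lem:W2-minus-W3} then provides the matching oracle inequality
\[
\abs{\mathcal{W}_2(f)-\mathcal{W}_3(f)}\leq\delta,\qquad \delta=\frac{B_1^2 h}{\Lambda},
\]
uniformly over $\norm{f}_K\leq r$. Note that this step involves no noise-dependent events since both $\mathcal{W}_2$ and $\mathcal{W}_3$ are formed from the noise-free signal $u(t)$, so there is no probability qualifier to carry along.

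Plugging $\delta$ and $\lambda=2\Lambda$ into the conclusion of Lemma \ref{lem:convexity-bound} gives
\[
\norm{f_2-f_3}_K^2\leq\frac{2\delta}{\lambda}=\frac{B_1^2 h}{\Lambda^2},
\]
and taking square roots yields $\norm{f_2-f_3}_K\leq B_1 h^{1/2}/\Lambda$, which is stronger than (and therefore implies) the claimed bound $2B_1 h^{1/2}/\Lambda$. There is no real obstacle here; the argument is essentially a verbatim repeat of the derivation of Lemma \ref{lem:f1-minus-f2}, with the only substantive input being the $\mathcal{W}_2$-versus-$\mathcal{W}_3$ comparison already established in Lemma \ref{lem:W2-minus-W3}. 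The mild subtlety worth flagging is simply that strong convexity must be verified for the functionals in the reproducing-kernel Hilbert space norm (which is the norm appearing in the conclusion), and this is where the factor $2\Lambda$ enters the denominator after applying Lemma \ref{lem:convexity-bound}.
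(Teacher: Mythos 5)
Your argument is correct and is exactly the paper's route: the paper's proof is the one-line assertion that the bound follows from Lemma \ref{lem:W2-minus-W3} together with the convexity stability estimate of Lemma \ref{lem:convexity-bound}, and you have simply spelled out the details (strong-convexity parameter $2\Lambda$, radius $r=B/\Lambda^{1/2}$ from the minimizer norm bound, $\delta=B_1^2h/\Lambda$). Your computation even yields the slightly sharper constant $B_1 h^{1/2}/\Lambda$, which of course implies the stated $2B_1 h^{1/2}/\Lambda$.
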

\begin{proof}
Follows from Lemmas \ref{lem:lemma-f1f2f3bound}, \ref{lem:W2-minus-W3},
and \ref{lem:convexity-bound}. Lemma \ref{lem:convexity-bound} is
applied with $r=\frac{B}{\Lambda^{1/2}}$, $\delta=\frac{B_{1}^{2}h}{\Lambda}$,
and $\lambda=2\Lambda$. The choices of $r$, $\delta$, and $\Lambda$
are justified using Lemmas \ref{lem:lemma-f1f2f3bound} and \ref{lem:W2-minus-W3}
and an additional argument as in the proof of Lemma \ref{lem:f1-minus-f2}.
\end{proof}
Choose $\epsilon>0$. At this point, we specialize $K$ to a kernel
for which the universality theorem of Steinwart applies. For example,
$K=K_{\gamma}$. We may then find $F_{\epsilon}\in\mathcal{H}_{K}$
such that $\norm{F_{\epsilon}-F}_{\infty}\leq\epsilon$, where the
$\infty$-norm is over $Y$. In fact, we will need the difference
$\abs{F_{\epsilon}(x)-F(x)}$ to be bounded by $\epsilon$ only for
$x\in X$. The larger compact space $Y$ is needed to apply the universality
theorem and for other RKHS arguments. 
\begin{lem}
Let $\Lambda=\epsilon^{2}/\norm{F_{\epsilon}}_{K}^{2}\leq1$. If $f_{3}$
minimizes $\mathcal{W}_{3}(f)$, we have 
\[
\frac{1}{T}\int_{0}^{T}\left(f_{3}(u(t;\tau;\omega))-u(t+t_{f};\omega)\right)^{2}\,dt\leq\Lambda\norm{F_{\epsilon}}_{K}^{2}+\epsilon^{2}=2\epsilon^{2}.
\]
 In addition, $\norm{f_{3}}_{K}^{2}\leq2\norm{F_{\epsilon}}_{K}^{2}.$\label{lem:choice-of-Lambda}\end{lem}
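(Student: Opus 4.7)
The plan is to exploit that $f_3$ is the minimizer of $\mathcal{W}_3$ by comparing $\mathcal{W}_3(f_3)$ against $\mathcal{W}_3(F_\epsilon)$, where $F_\epsilon$ is the RKHS approximation to the exact predictor $F$ supplied by Steinwart's universality theorem. Since both terms appearing in $\mathcal{W}_3$ are nonnegative, bounding $\mathcal{W}_3(f_3)$ from above automatically yields bounds on each summand, which is exactly what the lemma claims.

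Concretely, the first step is to evaluate $\mathcal{W}_3(F_\epsilon)$. By Lemma~\ref{lem:lemma-1}, along the trajectory we have $u(t+\tau)=F(u(t;\tau))$ whenever $u(t;\tau)\in X$, which is almost surely the case since $u(0;\tau)\in X$ and $X$ is invariant. Therefore
\[
\bigl(F_\epsilon(u(t;\tau))-u(t+\tau)\bigr)^2 = \bigl(F_\epsilon(u(t;\tau))-F(u(t;\tau))\bigr)^2 \leq \|F_\epsilon - F\|_\infty^2 \leq \epsilon^2,
\]
using that $\|F_\epsilon - F\|_\infty \leq \epsilon$ on $X$ from the universality theorem. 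Integrating over $[0,T]$ and dividing by $T$ gives $\frac{1}{T}\int_0^T(F_\epsilon(u(t;\tau))-u(t+\tau))^2\,dt \leq \epsilon^2$, so
\[
\mathcal{W}_3(F_\epsilon) \leq \epsilon^2 + \Lambda\|F_\epsilon\|_K^2 = \epsilon^2 + \epsilon^2 = 2\epsilon^2,
\]
by the definition $\Lambda = \epsilon^2/\|F_\epsilon\|_K^2$.

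The second step is the comparison $\mathcal{W}_3(f_3)\leq \mathcal{W}_3(F_\epsilon)\leq 2\epsilon^2$, which holds because $f_3 = \argmin \mathcal{W}_3$. Dropping the (nonnegative) regularization term yields the first claim
\[
\frac{1}{T}\int_0^T\bigl(f_3(u(t;\tau))-u(t+\tau)\bigr)^2\,dt \leq \mathcal{W}_3(f_3) \leq 2\epsilon^2,
\]
and dropping the (nonnegative) data-fit term yields $\Lambda\|f_3\|_K^2 \leq 2\epsilon^2 = 2\Lambda\|F_\epsilon\|_K^2$, i.e.\ $\|f_3\|_K^2 \leq 2\|F_\epsilon\|_K^2$, which is the second claim. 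There is no real obstacle here: the argument is the standard "oracle inequality" trick, and the only thing to verify carefully is that the exact predictor relation $F(u(t;\tau))=u(t+\tau)$ is valid along the random trajectory on the whole interval $[0,T]$, which is immediate from invariance of $X$ under the lifted flow.
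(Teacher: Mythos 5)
Your proof is correct and follows essentially the same route as the paper: compare $\mathcal{W}_3(f_3)$ to $\mathcal{W}_3(F_\epsilon)$, bound the latter by $\epsilon^2 + \Lambda\|F_\epsilon\|_K^2 = 2\epsilon^2$ using the exact predictor relation $u(t+\tau)=F(u(t;\tau))$ and $\|F_\epsilon-F\|_\infty\leq\epsilon$, then extract each claim by dropping a nonnegative term.
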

\begin{proof}
We have 
\[
\frac{1}{T}\int_{0}^{T}\left(f_{3}(u(t;\tau;\omega))-u(t+t_{f};\omega)\right)^{2}\,dt\leq\mathcal{W}_{3}(f_{3}),
\]
$\mathcal{W}_{3}(f_{3})\leq\mathcal{W}_{3}(F_{\epsilon})$ because
$f_{3}$ is the minimizer, and 
\[
\mathcal{W}_{3}(F_{\epsilon})\leq\epsilon^{2}+\Lambda\norm{F_{\epsilon}}_{K}^{2}.
\]
This last inequality uses $\int(F_{\epsilon}(u(t;\tau;\omega))-u(t+t_{f};\omega))^{2}\,dt=\int(F_{\epsilon}(u(t;\tau;\omega))-F(u(t;\tau;\omega))^{2}\,dt$.
The proof of the first part of the lemma is completed by combining
the inequalities. To prove the second part, we argue similarly after
noting $\norm{f_{3}}_{K}^{2}\leq\mathcal{W}_{3}(F_{\epsilon})/\Lambda$.
\end{proof}
Consider half-open boxes in $\mathbb{R}^{D}$ of the form 
\[
A_{j_{1},j_{2},\ldots,j_{D}}=\halfi{\frac{j_{1}}{2^{\ell}},\frac{j_{1}+1}{2^{\ell}}}\times\cdots\times\halfi{\frac{j_{D}}{2^{\ell}},\frac{j_{D}+1}{2^{\ell}}},
\]
with $\ell\in\mathbb{Z}^{+}$and $j_{i}\in\mathbb{Z}$. The whole
of $\mathbb{R}^{D}$ is a disjoint union of such boxes. Because $X$
is compact, we can assume that $X\subset\cup_{j=1}^{L}A_{j}$, where
the union is disjoint, each $A_{j}$ is a half-open box of the form
above, and $A_{j}\cap X\neq\phi$ for $1\leq j\leq L$. 

We will pick $\ell$ to be so large, that each box has a diameter
that is bounded as follows:
\[
\frac{\sqrt{D}}{2^{\ell}}<\frac{\delta}{4\sqrt{2}D^{1/2}\norm{\partial^{2}K}_{2,\infty}^{1/2}\norm{F_{\epsilon}}_{K}}.
\]
Here $\delta>0$ is determined later, and $\norm{\partial^{2}K}_{2,\infty}$
is the $\infty$-norm in the function space $C^{2}(Y\times Y)$. Lemma
\ref{lem:choice-of-Lambda} tells us that $\norm{f_{3}}_{K}\leq\sqrt{2}\norm{F_{\epsilon}}_{K}$,
and therefore (by part (c) of Theorem 1 of \cite{Zhou2008}) $\norm{\partial f_{3}}_{\infty}\leq\sqrt{2}D^{1/2}\norm{\partial^{2}K}_{2,\infty}^{1/2}\norm{F_{\epsilon}}_{K}$.
As a consequence of our choice of $\ell$, $x,y\in A_{j}$ implies
that 
\begin{equation}
\abs{f_{3}(x)-f_{3}(y)}<\delta/4,\label{eq:f3-derv-bound}
\end{equation}
bounding the variation of $f_{3}$ within a single cell $A_{j}$.
Because the exact predictor $F$ is $C^{r}$, $r\geq2$, and $X$
is compact, we may also assert that 
\begin{equation}
\abs{F(x)-F(y)}<\delta/4\label{eq:F-derv-bound}
\end{equation}
for $x,y\in A_{j}$ by taking $\ell$ larger if necessary.

The next lemma is about taking a trajectory that is long enough that
each of the sets $A_{j}$ is sampled accurately. By assumption $X$
is the support of $\mu$. However, we may still have $\mu(A_{j})=0$
for some $j$. In the following lemma and later, it is assumed that
all $A_{j}$ with $\mu(A_{j})=0$ are eliminated from the list of
boxes covering $X$.
\begin{lem}
Let $\chi_{A_{j}}$ denote the characteristic function of the set
$A_{j}$. There exist $T^{\ast}>0$ and a Borel measurable set 
\[
S_{\epsilon,T^{\ast}}\subset X
\]
such that $\omega\in S_{\epsilon,T^{\ast}}$ implies that for all
$T\geq T^{\ast}$ and $j=1,\ldots,L$ 
\[
\abs{\frac{1}{T}\int_{0}^{T}\chi_{A_{j}}\left(u(t;\tau;\omega)\right)\,dt-\mu(A_{j})}\leq\epsilon\mu(A_{j}).
\]
and with $\mu\left(S_{\epsilon,T^{\ast}}\right)>1-\epsilon$.\label{lem:covering-lemma}\end{lem}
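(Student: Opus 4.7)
The plan is to invoke Birkhoff's pointwise ergodic theorem for the flow restricted to $X$, and then upgrade the resulting pointwise convergence to uniform convergence on a large-measure set via Egorov's theorem, exploiting the fact that only finitely many boxes $A_j$ need to be controlled.

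First I would apply Birkhoff's theorem to the $\mu$-preserving ergodic flow lifted to $X$, with $\chi_{A_j} \in L^1(\mu)$, to obtain for each $j$ a full-measure set $X_j \subset X$ on which
\[
\frac{1}{T}\int_0^T \chi_{A_j}(u(t;\tau))\,dt \longrightarrow \int_X \chi_{A_j}\,d\mu = \mu(A_j)
\]
as $T \to \infty$. Since by convention the boxes with $\mu(A_j)=0$ have already been discarded, the intersection $X_0 = \bigcap_{j=1}^L X_j$ is still a set of full $\mu$-measure, and on it the normalized error
\[
E_T(u(0;\tau)) := \max_{1\leq j \leq L}\frac{1}{\mu(A_j)}\Abs{\frac{1}{T}\int_0^T \chi_{A_j}(u(t;\tau))\,dt - \mu(A_j)}
\]
converges pointwise to zero.

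Second, to handle the continuous parameter $T$, I would restrict attention to the integer sequence $T = k \in \mathbb{Z}^+$, on which $E_k$ is a countable family of measurable functions converging pointwise to $0$ on $X_0$. Egorov's theorem then produces a measurable set $S_{\epsilon,T^*} \subset X_0$ with $\mu(S_{\epsilon,T^*}) > 1-\epsilon$ and an integer $T^*$ such that $E_k(x) \leq \epsilon/2$ for all $k \geq T^*$ and all $x \in S_{\epsilon,T^*}$. For arbitrary real $T \geq T^* + 1$, writing $T = k + s$ with $k = \lfloor T\rfloor$ and $s \in [0,1)$, the elementary estimate
\[
\Abs{\frac{1}{T}\int_0^T \chi_{A_j} - \frac{1}{k}\int_0^k \chi_{A_j}} \leq \frac{2}{T}
\]
shows that enlarging $T^*$ (depending only on $\epsilon$ and $\min_j \mu(A_j)$) absorbs the discrepancy between integer and non-integer $T$, yielding the desired bound $|g_{j,T}(x) - \mu(A_j)| \leq \epsilon\,\mu(A_j)$ for all $T \geq T^*$, all $j$, and all $x \in S_{\epsilon,T^*}$.

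There is no serious obstacle: both Birkhoff and Egorov are off-the-shelf, and the passage from the discrete sequence $k \in \mathbb{Z}^+$ to a continuous $T$ is a straightforward one-line estimate. The only point that requires a small amount of care is that Egorov must be applied to a single pointwise-convergent family rather than once per box; packaging the $L$ boxes into the single maximum error $E_T$, which is legitimate precisely because $L$ is finite and each $\mu(A_j) > 0$, lets one extract simultaneous uniformity in $j$ without losing more than $\epsilon$ in measure.
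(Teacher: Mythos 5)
Your proof is correct, and at the conceptual level it is close to the paper's: both first invoke Birkhoff's ergodic theorem for each of the finitely many $\chi_{A_j}$, then upgrade a.e.\ pointwise convergence to uniform convergence on a set of measure at least $1-\epsilon$. The technical execution differs in two ways. First, the paper does not invoke Egorov as a black box; it defines, for each $j$, the shrinking sets $A_{s,\epsilon} = \{x : |\frac{1}{T}\int_0^T \chi_{A_j} - \mu(A_j)| > \epsilon\mu(A_j) \text{ for some } T \geq s\}$, observes that $\bigcap_s A_{s,\epsilon}$ is null by Birkhoff, concludes $\mu(A_{s,\epsilon}) \to 0$ by continuity of measure from above, picks $s_j$ with $\mu(A_{s_j,\epsilon}) < \epsilon/L$, and takes $T^* = \max_j s_j$ with a union bound. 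This is essentially a direct re-derivation of the relevant case of Egorov, with the bonus that it quantifies over \emph{all} real $T \geq s$ from the outset and so avoids any passage from integer to real $T$. Second, you package the $L$ error functionals into a single max $E_T$ and apply Egorov once, whereas the paper runs $L$ separate arguments with tolerance $\epsilon/L$; these are equivalent in cost. What your route buys is brevity by citing a standard theorem; what the paper's route buys is self-containedness and a cleaner handling of the continuous time parameter (no discrete-to-continuous interpolation step, no need to enlarge $T^*$ by a quantity depending on $\min_j \mu(A_j)$). Your interpolation estimate $|\frac{1}{T}\int_0^T - \frac{1}{k}\int_0^k| \leq 2/T$ is correct, so the extra step closes properly; just make sure to record that $T^*$ is ultimately taken $\geq 4/(\epsilon\min_j\mu(A_j))$, which is legitimate since the $A_j$ with $\mu(A_j)=0$ have been discarded.
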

\begin{proof}
To begin with, consider the set $A_{1}$. By the ergodic theorem,
\[
\lim_{T\rightarrow\infty}\frac{1}{T}\int_{0}^{T}\chi_{A_{1}}\left(u(t;\tau;\omega)\right)\,dt=\mu(A_{1})
\]
for $\omega\in S\subset X$ with $\mu(S)=1$. Let 
\[
A_{s,\epsilon}=\left\{ \omega\in X\biggl|\abs{\frac{1}{T}\int_{0}^{T}\chi_{A_{1}}(u(t;\tau;\omega))\,dt-\mu(A_{1})}>\epsilon\mu(A_{1})\:\:\text{for some}\:\:T\geq s\right\} .
\]
The sets $A_{s,\epsilon}$ shrink with increasing $s$. Then the measure
of $\cap_{s=1}^{\infty}A_{s,\epsilon}$ under $\mu$ is zero. Therefore,
there exists $s_{1}\in\mathbb{Z}^{+}$ such that $\mu(A_{s_{1},\epsilon})<\epsilon/L$. 

We can find $s_{2},\ldots,s_{L}$ similarly by considering the sets
$A_{2},\ldots,A_{L}$. The lemma then holds with $T^{\ast}=\max(s_{1},\ldots,s_{L})$.\end{proof}
\begin{lem}
Suppose that $\omega\in S_{\epsilon,T^{\ast}}$, $T\geq T^{\ast}$,
and $\Lambda=\epsilon^{2}/\norm{F_{\epsilon}}_{K}^{2}\leq1$. Suppose
that $f_{3}$ minimizes $\mathcal{W}_{3}(f)$, which is defined using
$u(t;\omega)$, $T$, and $\Lambda$ . Then 
\[
\mu\left\{ x\in X\bigl|\abs{f_{3}(x)-F(x)}\geq\delta\right\} <\frac{8\epsilon^{2}}{\delta^{2}(1-\epsilon)}.
\]
\label{lem:cvg-lemma}\end{lem}
\begin{proof}
Denote the set $\left\{ x\in X\bigl|\abs{f_{3}(x)-F(x)}\geq\delta\right\} $
by $S_{\delta}$. Let $J$ be the set of all $j=1,\ldots,L$ such
that $\abs{f_{3}(x)-F(x)}\geq\delta$ for some $x\in A_{j}$. Evidently,
$S_{\delta}\subset\cup_{j\in J}A_{j},$ and it is sufficient to bound
the measure of $\cup_{j\in J}A_{j}$.

By (\ref{eq:f3-derv-bound}) and (\ref{eq:F-derv-bound}), if$\abs{f_{3}(x)-F(x)}\geq\delta$
for some $x\in A_{j}$ then for any $y\in A_{j}$, we have 
\begin{align}
\abs{f_{3}(y)-F(y)} & \geq\abs{f_{3}(x)-F(x)}-\abs{f_{3}(x)-f_{3}(y)}-\abs{F(x)-F(y)}\nonumber \\
 & >\frac{\delta}{2}.\label{eq:f3-minus-F}
\end{align}
For $\omega\in S_{\epsilon,T^{\ast}}$, we have 
\begin{align*}
\frac{1}{T}\int_{0}^{T}(f_{3}(u(t;\tau;\omega)-u(t+t_{f};\omega))^{2}\,dt & =\frac{1}{T}\int_{0}^{T}(f_{3}(u(t;\tau;\omega)-F(u(t;\tau;\omega))^{2}\,dt\\
 & \geq\frac{1}{T}\int_{0}^{T}(f_{3}(u(t;\tau;\omega)-F(u(t;\tau;\omega))^{2}\sum_{j\in J}\chi_{A_{j}}\left(u(t;\tau;\omega)\right)\,dt\\
 & =\frac{1}{T}\sum_{j\in J}\int_{0}^{T}(f_{3}(u(t;\tau;\omega))-F(u(t;\tau;\omega))^{2}\chi_{A_{j}}\left(u(t;\tau;\omega)\right)\,dt\\
 & \geq\frac{\delta^{2}}{4T}\sum_{j\in J}\int_{0}^{T}\chi_{A_{j}}\left(u(t;\tau;\omega)\right)\,dt\\
 & \geq\frac{\delta^{2}}{4}\mu\left(\cup_{j\in J}A_{j}\right)(1-\epsilon),
\end{align*}
where the first inequality holds because $A_{j}$ are disjoint, the
second inequality holds because $\abs{f_{3}(y)-F(y)}>\delta/2$ follows
from (\ref{eq:f3-minus-F}) for $y=u(t;\tau;\omega)\in A_{j}$ with
$j\in J$, and the final inequality is a consequence of Lemma \ref{lem:covering-lemma}
and $\omega\in S_{\epsilon,T^{\ast}}$.

Applying Lemma \ref{lem:choice-of-Lambda}, we get 
\[
\frac{\delta^{2}}{4}\mu\left(\bigcup_{j\in J}A_{j}\right)(1-\epsilon)\leq2\epsilon^{2},
\]
completing the proof of the lemma.\end{proof}
\begin{lem}
Suppose $\omega\in S_{\epsilon,T^{\ast}}$ and that the signals $u(t;\omega)$
and $u_{\eta}(t;\omega)$ are used to define $\mathcal{W}_{i}(f)$,
$i=1,2,3$. Suppose that $f_{1}$, $f_{2}$, and $f_{3}$ minimize
$\mathcal{W}_{1}(f)$, $\mathcal{W}_{2}(f)$, and $\mathcal{W}_{3}(f)$,
respectively, with $T\geq T^{\ast}$ and $\Lambda=\epsilon^{2}/\norm{F_{\epsilon}}_{K}^{2}\leq1$.
Then 
\[
\mu\left\{ x\in X\biggl|\abs{f_{1}(x)-F(x)}>\delta+\frac{B_{1}h^{^{1/2}}+B_{1}\Delta^{1/2}}{\Lambda}\right\} <\frac{8\epsilon^{2}}{\delta^{2}(1-\epsilon)}
\]
 with probability greater than $1-p$.\end{lem}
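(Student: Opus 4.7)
The proof will simply synthesize the earlier lemmas via the triangle inequality applied to $f_{1}-F = (f_{1}-f_{2}) + (f_{2}-f_{3}) + (f_{3}-F)$, trading a pointwise $\infty$-norm bound for the first two differences against a measure estimate for the third. The plan is as follows.

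First, I would combine Lemmas \ref{lem:f1-minus-f2} and \ref{lem:f2-minus-f3} via the triangle inequality in the reproducing kernel Hilbert space to get
\[
\norm{f_{1}-f_{3}}_{K}\leq\norm{f_{1}-f_{2}}_{K}+\norm{f_{2}-f_{3}}_{K}\leq\frac{2B_{1}\Delta^{1/2}}{\Lambda}+\frac{2B_{1}h^{1/2}}{\Lambda},
\]
this inequality holding with probability greater than $1-\pi(n,N,\Delta,u(0;\tau))$, since the randomness enters only through the denoising step in Lemma \ref{lem:f1-minus-f2}. Next, I would convert this $K$-norm bound to an $\infty$-norm bound on $Y$ using the standard RKHS embedding $\norm{g}_{\infty}\leq c_{0}\norm{g}_{K}$ (with $c_{0}=\norm{K}_{\infty}=1$ for the Gaussian kernel), absorbing the resulting factor into the constant $B_{1}$. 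This gives a deterministic pointwise bound
\[
\abs{f_{1}(x)-f_{3}(x)}\leq\frac{B_{1}(h^{1/2}+\Delta^{1/2})}{\Lambda}\qquad\text{for all }x\in Y,
\]
valid on the same high-probability event.

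Second, Lemma \ref{lem:cvg-lemma} (which applies because $u(0;\tau)\in S_{\epsilon,T^{\ast}}$, $T\geq T^{\ast}$, and $\Lambda=\epsilon^{2}/\norm{F_{\epsilon}}_{K}^{2}\leq 1$) tells us
\[
\mu\left\{x\in X \bigl| \abs{f_{3}(x)-F(x)}\geq\delta\right\}<\frac{8\epsilon^{2}}{\delta^{2}(1-\epsilon)}.
\]
Denote this exceptional set by $E_{\delta}$.

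Finally, on the complement of $E_{\delta}$ within $X$, the triangle inequality $\abs{f_{1}(x)-F(x)}\leq\abs{f_{1}(x)-f_{3}(x)}+\abs{f_{3}(x)-F(x)}$ combined with the two estimates above yields
\[
\abs{f_{1}(x)-F(x)}\leq\delta+\frac{B_{1}h^{1/2}+B_{1}\Delta^{1/2}}{\Lambda}.
\]
Hence the set on which $\abs{f_{1}-F}$ exceeds $\delta+(B_{1}h^{1/2}+B_{1}\Delta^{1/2})/\Lambda$ is contained in $E_{\delta}$, and its $\mu$-measure is at most $8\epsilon^{2}/(\delta^{2}(1-\epsilon))$. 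This whole chain is valid on the event that spline smoothing achieves $\norm{u_{s}-u}_{\infty}\leq\Delta$, which has probability at least $1-\pi(n,N,\Delta,u(0;\tau))$, giving the claim.

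There is really no substantial obstacle: the argument is a mechanical concatenation. The only thing to be careful about is bookkeeping for the constant $B_{1}$ (the statement implicitly absorbs the factor $2c_{0}$ arising from the RKHS evaluation bound into $B_{1}$), and tracking that the only randomness enters through the denoising event, so that the final probability statement cleanly reads as the complement of the denoising failure probability $\pi(n,N,\Delta,u(0;\tau))$.
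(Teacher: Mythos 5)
Your proof is correct and is precisely the elaboration the paper intends: the paper's own proof of this lemma is the single line ``Follows from Lemmas \ref{lem:f1-minus-f2}, \ref{lem:f2-minus-f3}, and \ref{lem:cvg-lemma},'' and your chain (triangle inequality on $f_1-f_2$ and $f_2-f_3$ in the $K$-norm, passage to the sup-norm via the RKHS evaluation bound, then intersecting with the good set from Lemma \ref{lem:cvg-lemma} and noting that randomness enters only through the denoising event) is exactly what that citation unpacks to. You also correctly flag the one real wart: Lemmas \ref{lem:f1-minus-f2} and \ref{lem:f2-minus-f3} each carry a factor of $2$, so the honest sup-norm bound on $f_1-f_3$ is $2c_0 B_1(h^{1/2}+\Delta^{1/2})/\Lambda$ rather than $B_1(h^{1/2}+\Delta^{1/2})/\Lambda$, and since $B_1$ was already pinned down numerically in Lemma \ref{lem:W1-minus-W2} one cannot silently ``absorb'' the extra factor without redefining it; this is a bookkeeping slip present in the paper itself and does not affect the qualitative conclusion (it only rescales the $3\sqrt\epsilon$ in the main theorem to a slightly larger constant multiple of $\sqrt\epsilon$).
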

\begin{proof}
Follows from Lemmas \ref{lem:f1-minus-f2}, \ref{lem:f2-minus-f3},
and \ref{lem:cvg-lemma}.
\end{proof}
The above lemma implies Theorem \ref{thm:convergence-theorem-main}
with the choice of $\delta$, $n$, and $\Delta$ specified above
it.

\section{Numerical illustrations}

We compare three methods to compute an approximate predictor $f$.
The first method is that of Müller et al \cite{MullerSmolaRatschScholkopfKohlmorgenVapnik1998}
given in (\ref{eq:algo-muller}). The second method is exactly the
same but with the least squares regression function. The third method
is the convergent algorithm given by (\ref{eq:algo-1}) and (\ref{eq:algo-2}). 

When comparing the methods, we always used the same noisy data for
all three methods. There can be some fluctuation due to the instance
of noise that is added to the exact signal $\tilde{x}(t)$ as well
as the segment of signal that is used. The effect of this fluctuation
on comparison is eliminated by using the same noisy data in each case.
In addition, reported results are averages over multiple datasets.
For all three methods, the error in the approximate predictor is estimated
by applying it to a noise-free stretch of the signal as in \cite{MullerSmolaRatschScholkopfKohlmorgenVapnik1998},
which is standard because the object of each method is to approximate
the exact predictor. 

The first signal we use is the same as in \cite{MullerSmolaRatschScholkopfKohlmorgenVapnik1998},
except for inevitable differences in instantiation. The Mackey-Glass
equation 
\[
\frac{d\tilde{x}(t)}{dt}=-0.1\tilde{x}(t)+\frac{0.2\tilde{x}(t-D)}{1+\tilde{x}(t-D)^{10}},
\]
with $D=17$, is solved with time step $\Delta t=0.1$ and transients
are eliminated to produce the exact signal $\tilde{x}(t)$. This signal
will of course have rounding errors and discretization errors, but
those are negligible compared to prediction errors. The standard deviation
of the Mackey-Glass signal is about $0.23$. An independent normally
distributed quantity of mean zero is added at each point so that the
ratio of the variance of the noise to that of the signal ($0.23^{2}$)
is equal to the desired signal-to-noise ratio (SNR). 

To confirm with \cite{MullerSmolaRatschScholkopfKohlmorgenVapnik1998},
the Mackey-Glass signal was down-sampled so that $nh=1$ and $n=1$.
The spline smoothing method would fare even better if we chose $h=.1$.
The delay and the embedding dimension used for delay coordinates were
$\tau=6$ and $D=6$, as in \cite{MullerSmolaRatschScholkopfKohlmorgenVapnik1998}.
The size of the training set was $N=1000$. For cross-validation,
the $\gamma/2D$ parameter was varied over $\left\{ 0.1,1.5,10.0,50.0,100.0\right\} $,
and the $\Lambda$ parameter was varied over $\left\{ 10^{-8.5},10^{-8},\ldots,10^{-0.5}\right\} $
for least squares with or without spline smoothing but over $\left\{ 10^{-10},10^{-6},10^{-2},10^{2}\right\} $
for the more expensive support vector regression. For support vector
regression, the $\epsilon$ was varied over $\left\{ 0.01,0.05,0.25\right\} $.
The phenomenon we will demonstrate is far more pronounced than the
slight gains obtained using more extensive cross-validation. For support
vector regression, we were able to reproduce the relevant results
reported in \cite{MullerSmolaRatschScholkopfKohlmorgenVapnik1998}.\footnote{The RMS error of $0.017$ reported for $t_{f}=1$ with SNR of 22.15\%
in \cite{MullerSmolaRatschScholkopfKohlmorgenVapnik1998} appears
to be a consequence of an unusually favorable noise or signal. The
typical RMS error is around $0.03$. We eliminate the effect of unusual
datasets by taking averages over multiple datasets.}

\begin{figure}
\centering{}\includegraphics[scale=0.6]{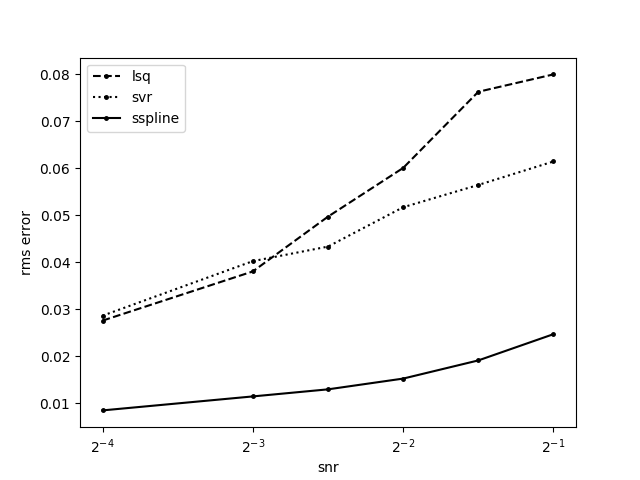}\caption{Root mean square errors in the prediction of the Mackey-Glass signal
with $t_{f}=1$ as a function of the signal to noise ratio. The superiority
of the method using smooth splines is evident.\label{fig:s3-mg-1step-rms-vs-snr}}
\end{figure}

Figure \ref{fig:s3-mg-1step-rms-vs-snr} demonstrates that (\ref{eq:algo-muller})
produces predictors that are corrupted by errors in the inputs or
delay coordinates. The method with spline smoothing is more accurate
and deteriorates less with increasing SNR. For the Mackey-Glass plots
in Figures \ref{fig:s3-mg-1step-rms-vs-snr}, \ref{fig:s2-itervsnoiter},
and \ref{fig:s3-mg-rms-vs-tf}, each point is an average over $480$
independent datasets in the case of least squares with or without
spline smoothing and over $48$ data sets in the case of support vector
regression. In all cases, using half as many datasets does not change
the picture.

\begin{figure}
\centering{}\includegraphics[scale=0.6]{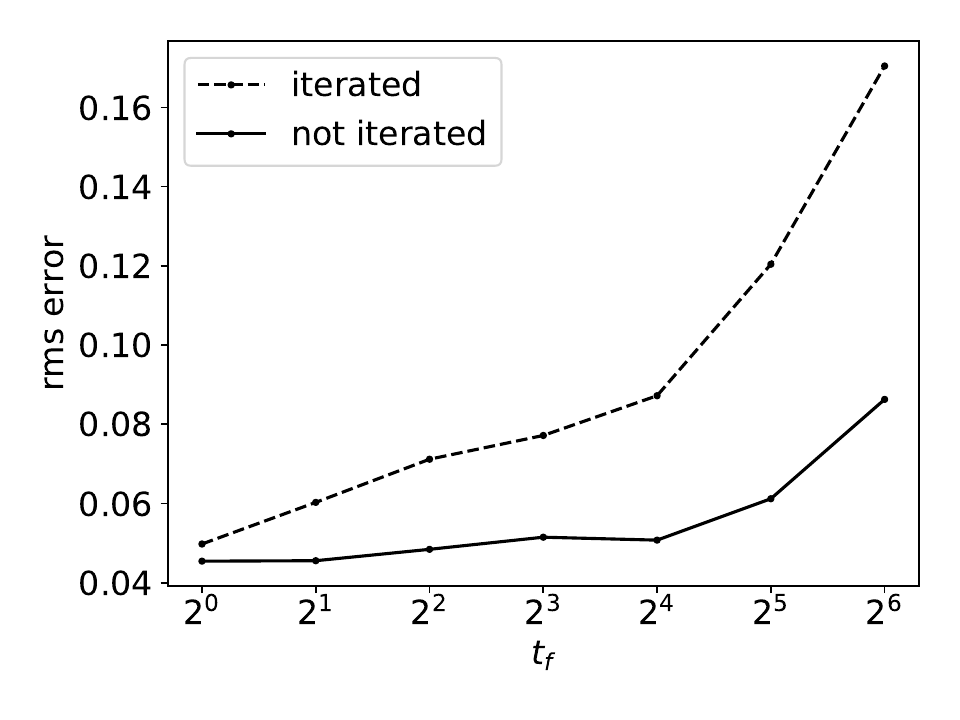}\caption{Comparison of the $1$-step least squares predictor (without spline
smoothing) iterated $t_{f}$ times with the $t_{f}$-step predictor
(without spline smoothing). The latter is seen to be superior.\label{fig:s2-itervsnoiter}}
\end{figure}
A $t_{f}=n_{f}\tau$ predictor can be obtained by iterating a $\tau$-step
predictor $n_{f}$ times, and this strategy is sometimes used to save
cost \cite{MullerSmolaRatschScholkopfKohlmorgenVapnik1998}. This
is not a good idea as explained in \cite{ViswanathLiangSerkh2013}
and as shown in Figure \ref{fig:s2-itervsnoiter}. An optimal predictor
would need to roughly split the distance to the nearest training sample
such that the component of the distance along unstable directions
is small and with the component along stable directions allowed to
be much larger. The balance between the two components depends upon
$t_{f}$, and therefore, iterating a one-step predictor is not a good
strategy. 

\begin{figure}
\centering{}\includegraphics[scale=0.45]{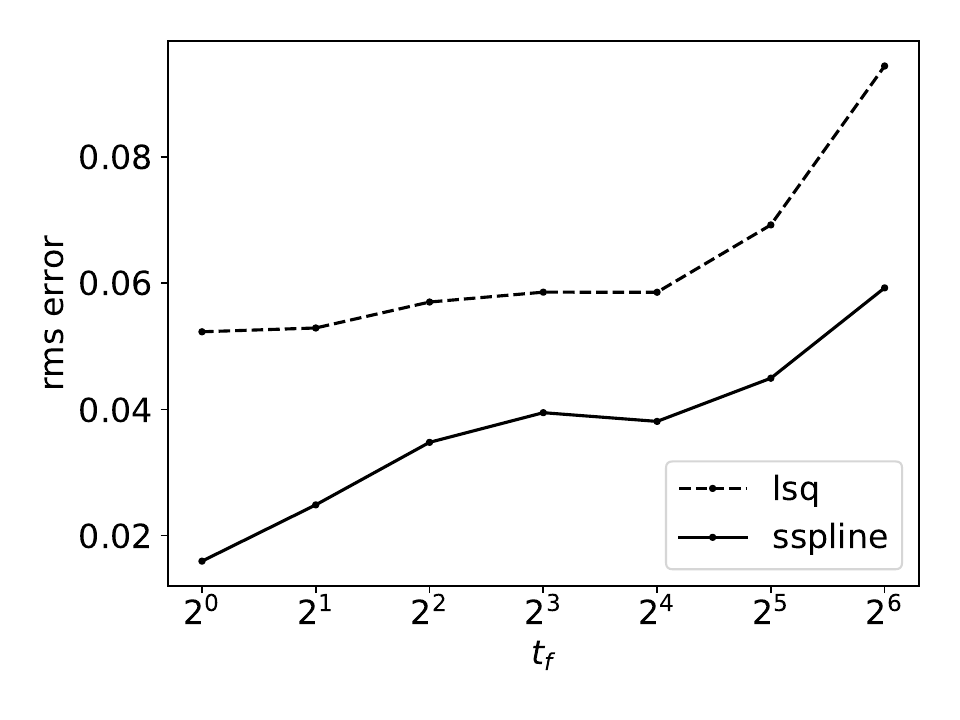}\hspace{0.25cm}\includegraphics[scale=0.45]{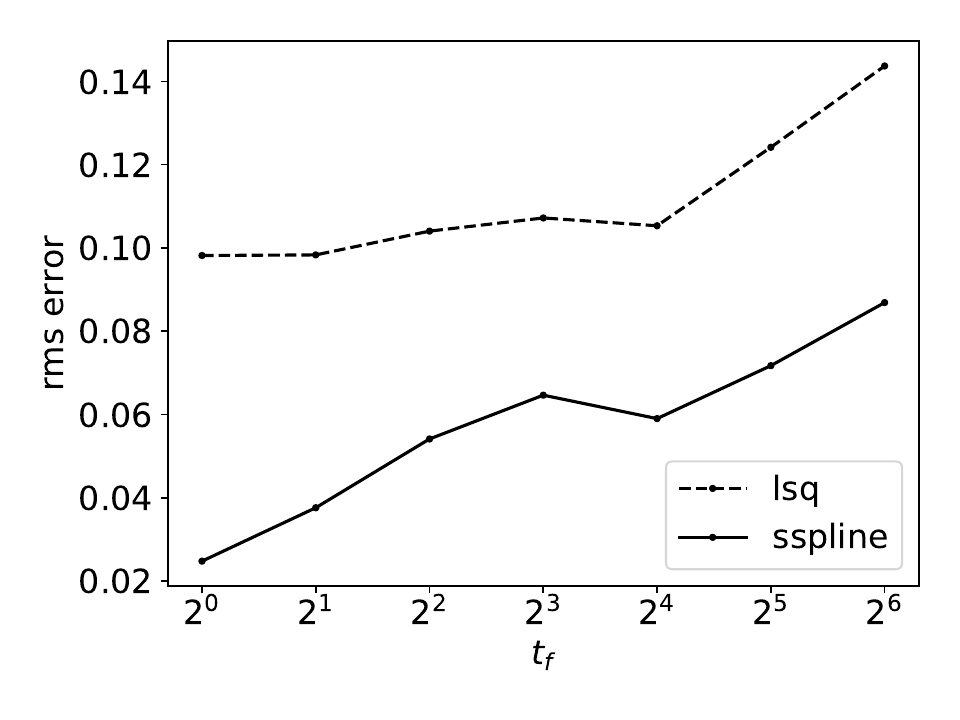}\caption{The plot on the left uses SNR of $0.2$ and the plot on the right
uses $0.4$. The method using smooth splines does better in all instances.\label{fig:s3-mg-rms-vs-tf}}
\end{figure}
 In Figure \ref{fig:s3-mg-1step-rms-vs-snr}, we see that spline smoothing
becomes more and more advantageous as noise increases. The situation
in Figure \ref{fig:s3-mg-rms-vs-tf} is a little different. When $t_{f}$
is small, spline smoothing does help more for the noisier SNR of $0.4$
compared to $0.2$. However, for larger $t_{f}$, even though spline
smoothing helps, it does not help more when the noise is higher. This
could be because as $t_{f}$ increases capturing the correct geometry
of the predictor becomes more and more difficult, and this difficulty
may be constraining the accuracy of the predictor.

The MacKey-Glass example is a delay-differential equation and does
not come under the purview of our convergence theorem. The Lorenz
example, $\dot{x}=10(y-x),\:\dot{y}=28x-y-xz,\:\dot{z}=-8z/3+xy$,
is a dynamical system with a compact invariant set and comes under
the purview of the convergence theorem. The Lorenz signal has a standard
deviation of $7.9$. For the Lorenz plots of Figure \ref{fig:s3-lrz-dt0.01-dt0.1},
each point is an average over $160$ datasets each with $N=1000$.
The picture did not change even with many fewer datasets. 

\begin{figure}
\centering{}\includegraphics[scale=0.35]{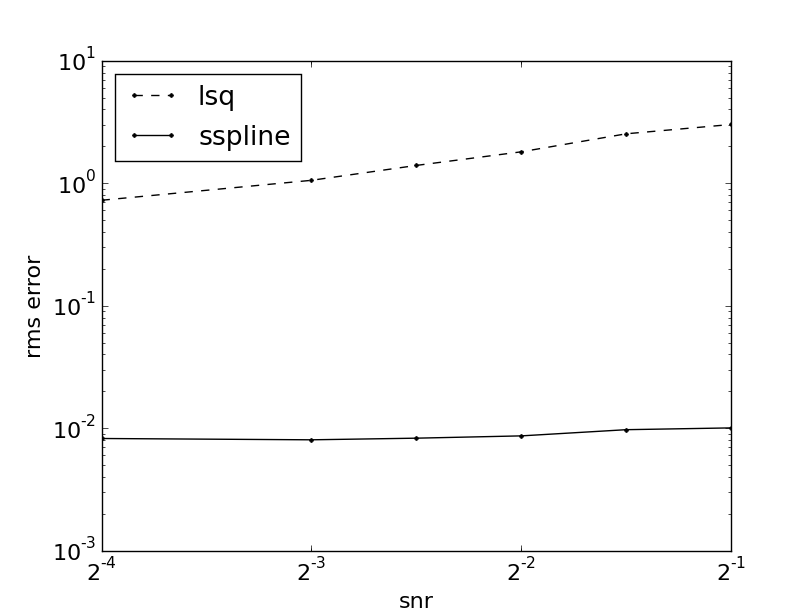}\hspace{0.25cm}\includegraphics[scale=0.35]{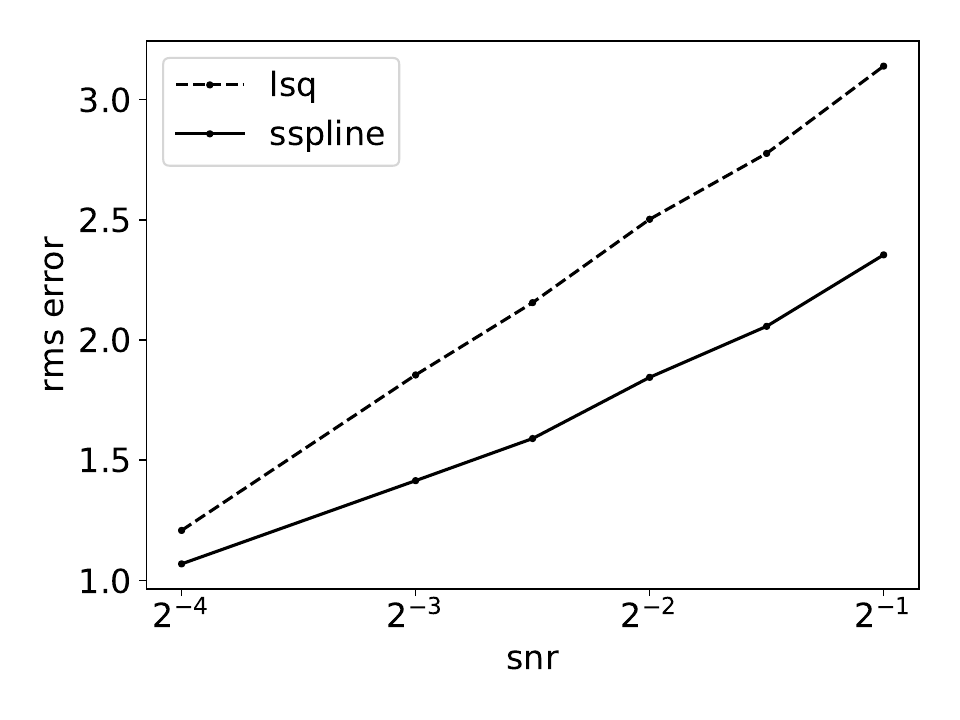}\caption{The advantage of spline smoothing for Lorenz is much less on the right
with $h=0.1$ than on the left with $h=0.01$. \label{fig:s3-lrz-dt0.01-dt0.1}}
\end{figure}
Figure \ref{fig:s3-lrz-dt0.01-dt0.1} compares $h=.01$ and $h=.1$
for Lorenz. In both cases, the embedding dimension is $d=10$, the
delay parameter is $\tau=1$, and the lookahead is $t_{f}=h$. It
may be seen that spline smoothing is less effective when $h=0.1$
as compared to $h=0.01$. A typical Lorenz oscillation has a period
of about $0.75$, and when $h=0.1$ the resolution is too low causing
too much discretization error. Smooth splines are less effective in
reconstructing the noise-free signal if the grid on the time axis
does not have sufficient resolution. The left half of Figure \ref{fig:s3-lrz-dt0.01-dt0.1}
shows an example where prediction using spline smoothing improves
accuracy by a factor of $100$ with $h=0.01$.

\section{Discussion}

For the prediction of dynamical time series, we have shown that flows
are quite different from maps. In the case of flows, the time series
can be denoised by relying solely on the smoothness of the underlying
flow. The predictor can be derived by applying kernel-based regression
to the denoised signal. The resulting predictor converges to the exact
predictor under conditions described by Theorem \ref{thm:convergence-theorem-main}.

As far as dynamical time series are concerned, the parameter estimation
problem \cite{McGoffMukherjeePillai2015,McGoffNobel2016} is complementary
to prediction. Much of the existing theory is for maps and with the
assumption of rapid mixing. For flows, smooth splines or a similar
technique may prove an effective method to denoise in the context
of parameter estimation as well.

The convergence theorem given here does not give rates and is not
uniform. Obtaining rates with uniformity over a class of flows will
probably require rapid mixing assumptions as in the case of maps \cite{HangFengSteinwartSuykens2016,SteinwartAnghel2009}.
Rapid mixing results for flows may be found in \cite{AraujoMelbourne2015}
for example.

With respect to rates and uniformity, there are two more issues that
would need to be considered. First, convergence of smooth splines
in the $\infty$-norm must be proved with explicit bounds that depend
only on the norm of the $m$-th derivative. A more significant point
is that rates of convergence for a given lookahead $t_{f}$ may not
be the best direction. As pointed out in \cite{ViswanathLiangSerkh2013},
the question of how large $t_{f}$ can be given a signal of length
$T$ appears to have implications for the prediction algorithm and
not just to its analysis. There is no evidence that existing algorithms
including the one in this paper are capable of predicting as far into
the future as an optimal algorithm should.

The smooth spline idea is primarily local and so are the optimality
results of Stone \cite{Stone1982}. Stone's algorithm for achievability
is to find a local scale and to fit a polynomial using linear least
squares within that local region. It is perhaps worth noting that
the same idea has a dynamical analog. In its dynamical version \cite{UrbanowiczHolyst2003},
the noisy dynamical time series is embedded within Euclidean space
using delay coordinates. The embedding will be necessarily noisy.
However, the embedded manifold can be smoothed locally using linear
techniques.

\section{Acknowledgements}

The authors thank the reviewer for a valuable report and the editors
for their comments.

\bibliographystyle{plain}
\bibliography{references}

\end{document}